\newtheorem{example}{Example}
\newtheorem{theorem}{Theorem}
\theoremstyle{definition}
\newtheorem{definition}{Definition}
\newcommand{\RNum}[1]{\expandafter{\romannumeral #1\relax}}
\newtheorem{lemma}{Lemma}
\title{Formal Logic Enabled Personalized Federated Learning \\ through Property Inference}
\author{
    Ziyan An, Taylor T. Johnson, Meiyi Ma\\
}
\begin{document}

\maketitle

\begin{abstract}

Recent advancements in federated learning (FL) have greatly facilitated the development of decentralized collaborative applications, particularly in the domain of Artificial Intelligence of Things (AIoT). However, a critical aspect missing from the current research landscape is the ability to enable data-driven client models with symbolic reasoning capabilities. Specifically, the inherent heterogeneity of participating client devices poses a significant challenge, as each client exhibits unique logic reasoning properties. Failing to consider these device-specific specifications can result in critical properties being missed in the client predictions, leading to suboptimal performance. In this work, we propose a new training paradigm that leverages temporal logic reasoning to address this issue. Our approach involves enhancing the training process by incorporating mechanically generated logic expressions for each FL client. Additionally, we introduce the concept of aggregation clusters and develop a partitioning algorithm to effectively group clients based on the alignment of their temporal reasoning properties. 
We evaluate the proposed method on two tasks: a real-world traffic volume prediction task consisting of sensory data from fifteen states and a smart city multi-task prediction utilizing synthetic data. The evaluation results exhibit clear improvements, with performance accuracy improved by up to 54\% across all sequential prediction models.

\end{abstract}

\section{Introduction}

In recent years, modern artificial intelligence (AI) models have been adapted to handle massively-distributed tasks deployed on non-independent and identically distributed (i.i.d) devices, including AI-empowered Internet-of-Things (IoT) services such as smart cities and smart healthcare~\cite{nguyen2021federated,ma2019data,ma2021toward, preum2021review}. However, a critical challenge that remains unsolved is the ability to enable large-scale distributed data-driven models with \textit{symbolic reasoning} capabilities.

More specifically, federated learning (FL)~\cite{mcmahan2017communication} frameworks have been developed to achieve remarkable performance for distributed training with reduced privacy risks. These frameworks enable collaborations between participating devices by aggregating their models through a centralized moderator. FL models have proven particularly useful in privacy-sensitive outdoor sensor networks, where participating clients can receive non-i.i.d or heterogeneous input data. 
Client data is securely maintained on-device, ensuring that sensitive information remains local and private. To facilitate model updates and collaboration, only client model parameters are synchronized with the centralized moderator.

\begin{figure}
\centering
    \includegraphics[width=.48\textwidth]{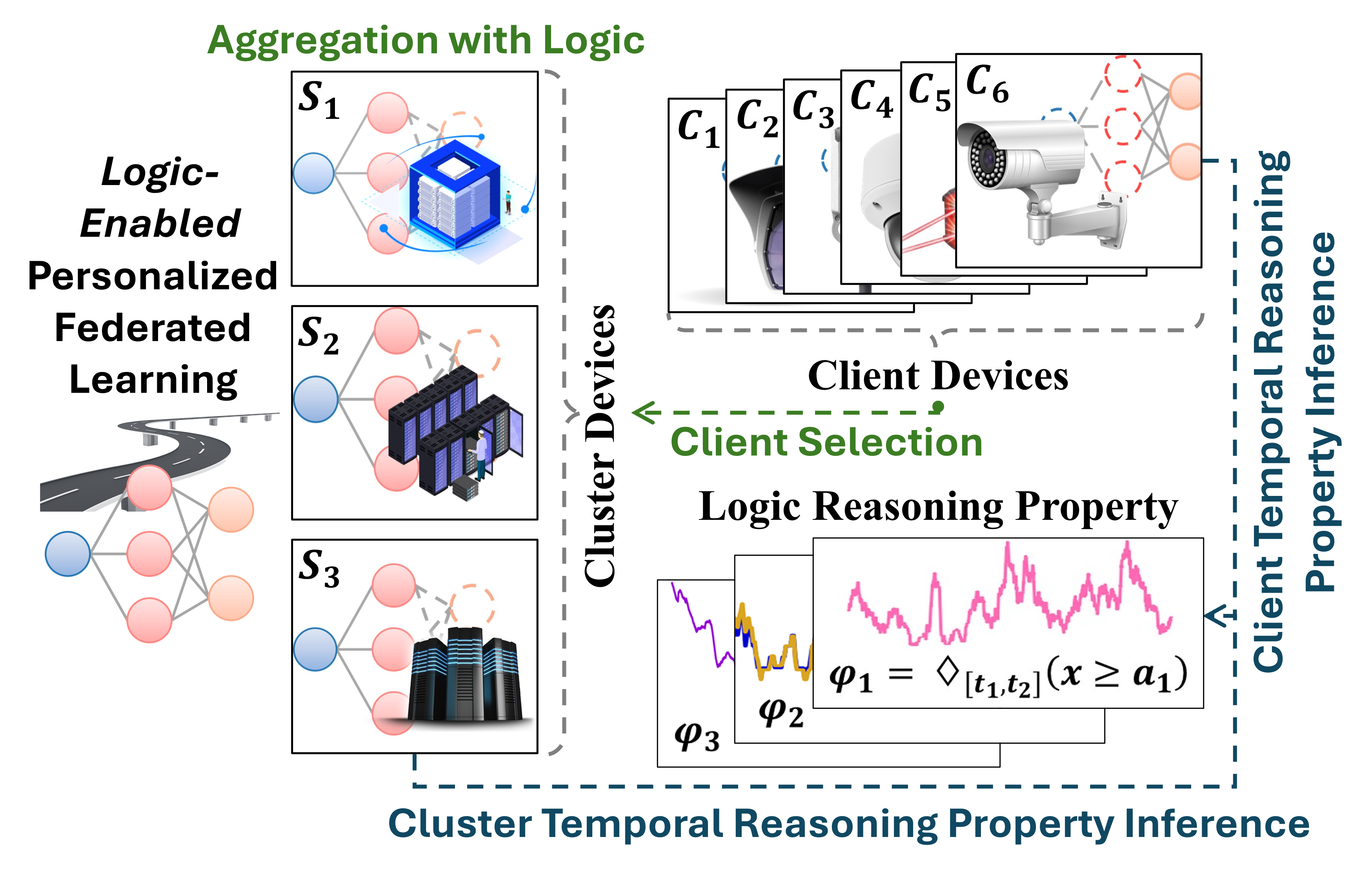}
    \caption{FedSTL consists of $\mathcal{S}$ cluster devices, and $\mathcal{C}$ client devices. During each communication round, client models are partitioned into clusters. Then, inferred temporal reasoning properties are enhanced on predictive models.}
    \label{fig:overview}
    \vspace{-1em}
\end{figure}

However, dealing with client heterogeneities symbolically remains a challenge for FL models. Previous FL designs have attempted to address this issue through methods such as client selection, clustering, and regularization~\cite{collins2021exploiting, li2021ditto, li2021fedbn, mohri2019agnostic, yu2020salvaging}. Unfortunately, these methods fail to provide any reasoning for the model's outputs. Furthermore, no logic reasoning properties or domain-specific knowledge can be integrated and enforced through such approaches.

Consider a sequential real-world prediction task that involves multiple types of sensors (e.g., radar sensors, video detection systems, air quality monitors) deployed at various locations throughout a city (e.g., highways, town roads, indoors)~\cite{hpmsfield,ma2021novel,ma2018cityresolver,ma2017cityguard}. The heterogeneity of deployment locations and sensor types leads to monitoring data with diverse distributions, with each client following their unique time-series patterns. Previous FL approaches have failed to provide any symbolic reasoning capabilities that enable client predictions to follow these distinct time-series patterns.

Additionally, current FL frameworks lack the ability of \textit{intra-task symbolic reasoning}. In the case of the previous example, multi-class prediction tasks, such as road occupancy and noise level, can exhibit correlations, but this correlation may not be consistent across all clients. A more advanced FL framework should incorporate the ability to understand how diverging intra-task logical reasoning patterns as such can be aligned with client predictions.

Motivated by these challenges, we present a new FL framework towards distributed temporal symbolic reasoning dubbed \textit{FedSTL}, short for \textit{signal temporal logic-enabled personalized federated learning}. Figure~\ref{fig:overview} depicts the overall structure of our approach. In FedSTL, the task is to predict future multivariate traces based on local datasets for each client. We aim to tackle two tasks in FL: 1) identifying and regulating client models with localized temporal reasoning properties at the training time, and 2) aggregating each client with others who have similar properties. 


We use the term ``property" to refer to any prior domain knowledge, world knowledge, or expertise knowledge. However, manually specifying comprehensive multivariate properties for large-scale FL systems is another significant challenge, where each client differs from the others. Hence, our model enables extracting localized client knowledge \textit{automatically} at training time to address this issue. 
By properties and specifications, we refer to patterns derived from the private data of each client. In Table~\ref{tab:stl-case}, we provide a more comprehensive set of examples demonstrating the types of properties and specifications that can seamlessly integrate into our framework.

In the personalized FL framework, each client has a locally distinct model responsible for making predictions based on their private input data~\cite{chen2022pfl}. The central aggregator(s) then leverage updates from multiple client models to improve the central model. However, in our framework, we introduce two modifications to this popular learning paradigm. Firstly, we incorporate client prediction regulations with automatically inferred logic reasoning. Secondly, we use temporal logic properties to cluster clients based on the alignment of these properties. Within each cluster, the members are aggregated to contribute to the same shared model. Additionally, our framework ensures \textit{multi-granularity personalization} by enforcing specialized cluster and client properties. 

By regulating client predictions with locally inferred properties, we enhance these properties for each client using a teacher-student learning paradigm, aligning the prediction results more closely with the specified requirements. This enables us to effectively address the challenge posed by heterogeneity in client deployment locations and sensor types, which leads to different data distributions for each client. In addition, we employ clustering based on property alignments, allowing us to aggregate clients with similar properties together. With this approach, we group clients whose properties exhibit higher similarity, thus reducing the aggregation of weights from non-agreeing clients.
Throughout this work, we use the terms ``property" and ``specification" interchangeably. We summarize the \textit{contributions} as follows:
\begin{itemize}[wide=0pt, leftmargin=*, topsep=0pt]
    \item FedSTL is a novel personalized FL framework that enhances temporal reasoning through automatically inferred logic properties for non-agreeing FL clients. 
    \item Our framework is designed to facilitate the automatic discovery and induction of client and cluster temporal logic specifications from datasets.
    \item FedSTL clusters FL clients based on agreements of specifications, enabling cross-client collaboration for cluster models to exploit shared knowledge.
    \item We evaluate FedSTL under various dataset settings, including two realistic testing scenarios with both real-world data and simulated datasets. Empirical evaluations demonstrate that FedSTL improves client-level model property satisfaction while boosting prediction accuracy compared to existing frameworks.
\end{itemize}


\paragraph{Notations}
$\{ \mathcal{C}_i \}$: client models; $\{ \mathcal{S}_j \}$: cluster models; $\{\mathcal{X} \times \mathcal{Y} \} \in \mathcal{D}_i$: client datasets; $\theta$: model parameters; $F$: local objective function; $G$: aggregation function; $\square$ \textit{always}; $\Diamond$: \textit{eventually}; $\mathcal{U}$: \textit{until}; $\mu$: predicate variable; $\varphi$: STL formula; $\varphi(\alpha)$: templated STL formula; $\rho$: STL robustness.

\begin{table*}[t]

\small 

\centering

\begin{tabular}{p{9cm}|p{5.6cm}|p{1.5cm}}

\toprule

\textbf{Temporal Reasoning Property}  & \textbf{Templated Logic Formula} & \textbf{Parameters} \\

\midrule

\textbf{1: Operational Range}: Signal is upper-bounded by threshold $a$ and lower-bounded by threshold $b$. 
&
$\bigwedge_{i=1}^{1, 2, ..., \tau} (\square_{[i,i+t]} (x\leq a_i\wedge x\geq b_i) ) $
& $a_i, b_i $ \\

\midrule 

\textbf{2: Existence}: Signal should eventually reach the upper extreme $a$ and the lower extreme $b$.
&
$\bigwedge_{i=1}^{1, 2, ..., \tau} (\Diamond_{[i,i+t]} (x \leq a_i \wedge x \geq b_i) ) $ 
& $a_i, b_i$ \\

\midrule 

\textbf{3: Until}: Signal must satisfy one specification at all times until another condition is met.
&
$\bigwedge_{i=1}^{1, 2, ..., t} ((x < a_i) \, \mathcal{U}_{[i,i+1]} (x < b_i))$ 
& $a_i, b_i$\\

\midrule

\textbf{4: Intra-task Reasoning}: The difference between signal variables $x_1$ and $x_2$ should be greater than $a$.
&
$\bigwedge_{i=1}^{1, 2, ..., \tau} (\square_{[i,i+t]}((x_1-x_2)>a_i))$ 
& $a_i$ \\

\midrule

\textbf{5: Temporal Implications}: The happening of one event indicates that another event will happen at some point in the future.
&
$\square_{[t_1,t_2]} ((x \geq a_1) \rightarrow \Diamond_{[t_3,t_4]}(x \geq a_2))$ 
& $a_1, a_2$ \\

\midrule

\textbf{6: Intra-task Nested Reasoning}: The signal variable $x_1$, when greater than a threshold $a$, indicates $x_2$ will eventually reach a threshold $b$. 
&
$\square_{[t_1,t_2]}((x_1 \geq a) \rightarrow \Diamond_{[t_3,t_4]} (x_2 \geq b))$ 
& $a, b$ \\

\midrule

\textbf{7: Multiple Eventualities}: Multiple events must eventually happen, but their order can be arbitrary.
&
$\Diamond_{[t_1,t_2]} (x \geq a_1) \wedge \cdots \wedge \Diamond_{[t_3,t_4]} (x \geq a_n)$ 
& $a_1, a_2 \cdots a_n$ \\

\midrule

\textbf{8: Template-free}: specification mining without a templated formula.
&
No pre-defined templates are needed.
&  n/a. \\

\bottomrule

\end{tabular}

\caption{Examples of temporal reasoning templates specified with STL.}
\label{tab:stl-case}
\end{table*}

\section{Problem Formulation}\label{sec:problem}

A general FL framework consists of a central server and $\{ \mathcal{C}_i \}$ client devices, each with its own local dataset $\mathcal{D}_i$ consisting of i.i.d data points $(x,y)$. The primary objective is to train a central model parameterized by $\theta_g$ without requiring the clients to share their private data.

During each communication round, the central server first broadcasts the current version of $\theta_g$ to the participating clients. The clients then use local datasets $\mathcal{D}_i$ to train the model for a certain number of iterations. Then, the clients send the updated model parameters $\{ \theta_i\}, i\in \mathcal{C}$ back to the central server, which performs an aggregation of these updates to generate a new version of $\theta_g$. This process is repeated for multiple times.

In this work, our focus is on training personalized parametric models $\{\theta_i \}$ for clients using a slightly different setup. More concretely, we partition client models into $\{ \mathcal{S}_j \}$ clusters, each with its own model parameter $\theta_j$. 
During each communication round, the cluster models $\{ \theta_j \}$ are broadcast to the clients assigned to the corresponding cluster. Clusters and clients then follow a similar back-and-forth communication rule as other general FL training paradigms. 



Let $f(\theta; x, y) \rightarrow \mathbb{R}$ denote the loss of model $\theta$ at the data point $(x,y)$. For each client $i \in \mathcal{C}$, let $F_i (\theta_i) \coloneqq \mathbb{E}_{(x,y) \sim \mathcal{D}_i} [f(\theta_i; x, y)]$ be the local objective function.
Our goal is to obtain better client models $\{ \hat{\theta}_i \}$ that are close to the optimal models $\theta_i^* \in \text{argmin}_{\theta_i} F_i (\theta_i)$ for each $i \in \mathcal{C}$. Additionally, we use $G(\cdot)$ to denote an aggregation function that defines how the client model updates are combined to form the global model update.

\section{Temporal Reasoning Property Inference}\label{sec:theo}

\subsection{Temporal Logic Specification}

We first introduce the preliminaries of signal temporal logic (STL)~\cite{maler2004monitoring}, which is a formalism that provides a flexible and rigorous way to specify temporal logic reasoning. To begin, we provide the syntax of an STL formula, as defined in Definition~\ref{def:stl}. 

\begin{definition}[STL syntax]\label{def:stl}
\[
\varphi 
::= 
\mu 
\mid \neg \mu
\mid \varphi_1\wedge \varphi_2
\mid \varphi_1 \vee \varphi_2  \]
\[ \mid \Diamond_{[a,b]}\varphi 
\mid \square_{[a,b]}\varphi
\mid \varphi_1 \mathcal{U}_{[a,b]} \varphi_2 \]

\end{definition}

We use the notation $[a, b] \in \mathbb{R}_{\geq 0}$, with $a \leq b$, to represent a temporal range. Let $\mu: \mathbb{R}^n \rightarrow \{ \top, \bot \}$ be a signal predicate (e.g. $f(x) \geq 0$) on the signal variable $x \in \mathcal{X}$. Additionally, we refer to different STL formulas using $\varphi$, $\varphi_1$, and $\varphi_2$. We use $\square$ to denote the property ``always," which requires the formula $\varphi$ to be true at all \textit{future} time steps within $[a, b]$. Similarly, we use $\mathbin{\Diamond}$ to denote ``eventually," which requires the formula $\varphi$ to be true at some future time steps between $[a, b]$. Finally, we use $\mathcal{U}$ to denote ``until," which specifies that $\varphi_1$ is true until $\varphi_2$ becomes true.

An example of an STL formula is $\square_{[0, 5]}( (x_1 \geq 0.75) \rightarrow (x_2 \geq 10 ))$, which formally specifies that if the signal variable $x_1$ exceeds or equals to $0.75$ during future times $[0, 5]$, then the signal variable $x_2$ should always be greater than or equal to $10$. 

\subsection{Logic Inference Through Observed Data}\label{sec:method-mining}

\paragraph{Logic inference} Logic inference~\cite{bartocci2022survey} is the process of generating logic properties based on observed facts when the desired system property is unknown or only partially available. Given some prior knowledge about the possible form of a logic property, the logic inference algorithm (specification mining~\cite{jha2017telex}) learns the complete logic formula. 
Formally, the logic property inference task is described in Definition~\ref{def:stlinf}. 
\begin{definition}[STL property inference]\label{def:stlinf}
Given an observed fact $x$ and a templated STL formula $\varphi(\alpha)$, where $\alpha$ is an unknown parameter, the task is to find a value for $\alpha$ such that $\varphi$ is satisfied for all instances of $x$. 
\end{definition}

We provide a practical example of STL property inference in Example~\ref{eg:tempstl}. 
\begin{example}[An example of STL inference]\label{eg:tempstl}
    \textnormal{Given a templated STL property} $\varphi_k (\alpha) = \square_{[0, 5)}( (x_1 \geq 0.75) \rightarrow  (x_2 \geq \alpha ))$, \textnormal{the goal is to find a value for the unknown parameter $\alpha$ such that during future time stamps $0$ to $5$, if the signal variable $x_1$ exceeds or equals to $0.75$, the signal variable $x_2$ should always be greater than or equal to $\alpha$. }
\end{example}

Furthermore, we have summarized eight categories of temporal reasoning properties in Table~\ref{tab:stl-case} that can be expressed using STL and can be inferred through specification mining algorithms.

In practice, there are infinite possible values that a free parameter (e.g., $\alpha$) can take to make a templated formula $\varphi(\alpha)$ valid. However, not all valid values of $\alpha$ are equal in terms of enhancing the reasoning property during the training process. To be more specific, we need to find an $\alpha$ value such that the observed facts satisfy $\varphi(\alpha)$ with a small margin. 
In the property inference process, the STL quantitative semantics (robustness) as defined in~\cite{donze2010robust} is used as a real-valued measurement for property satisfaction. 


In the following example, we briefly show how to utilize the robustness value as a real-valued measurement for property satisfaction. In this instance, we provide a 5-step sequential data, which is then evaluated against the temporal property listed in Example 1.
Continuing with the templated STL defined in Example~\ref{eg:tempstl}, when $\alpha=10$, the 5-step sequential data $\mathcal{X} = ((0.25, 20),\, (0.25, 18),\, (0.5, 16),\, (0.6, 14),\, (0.75, 12))$ results in a robustness value of $\rho (\varphi, \mathcal{X}) = -2$. 
The resulting robustness value is -2, indicating that the property was not satisfied.
However, a value close to $\alpha=12$ would fit $\mathcal{X}$ tightly, resulting in an STL property that more accurately describes the true observation. Therefore, the STL inference task can be better formulated as Definition~\ref{def:stltight}, which incorporates the notion of a tight bound.

\begin{definition}[STL property inference with a tight bound]\label{def:stltight}
Given an observed data $\mathcal{X}$ and a templated STL formula $\varphi (\alpha)$, where $\alpha$ is an unknown parameter. The goal is to identify a value for $\alpha$ that results in a tightly-fitted logic property, expressed by the equation $\rho (\varphi, \mathcal{X}; \alpha) = \epsilon$, where a smaller positive $\epsilon$ indicates a closer alignment with the data. 
\end{definition}

The task defined in Definition~\ref{def:stltight} can be effectively solved using the following algorithm, which can be solved with either gradient-free or gradient-based numerical optimization methods~\cite{jha2017telex}. 
\begin{equation}\label{eq:solvestl}
\begin{gathered}
    \mathop{\text{min}}{|\epsilon|} \; s.t. \; \epsilon=p^\prime-p  \\ 
    \text{where} \; \rho(\varphi(p),\mathcal{X},t) \geq 0 \; \text{and} \; \rho(\varphi(p^\prime),\mathcal{X},t) < 0
\end{gathered}
\end{equation}

In the templated logic formula $\varphi$, let $p$ and $p^\prime$ denote candidate values of free parameters, and let $t$ be a timestamp.  our objective is to minimize the value of $|\epsilon|$, which represents the discrepancy between a satisfactory parameter value and an unsatisfactory parameter value. 
The STL robustness function is not differentiable at zero. Therefore, to tackle this issue, alternatives such as the ``tightness metric"~\cite{jha2017telex} can be employed to effectively address this problem.

\section{Logic-Enabled Federated Learning}\label{sec:method}

\begin{figure*}[t]
    \centering
    \includegraphics[width=\textwidth]{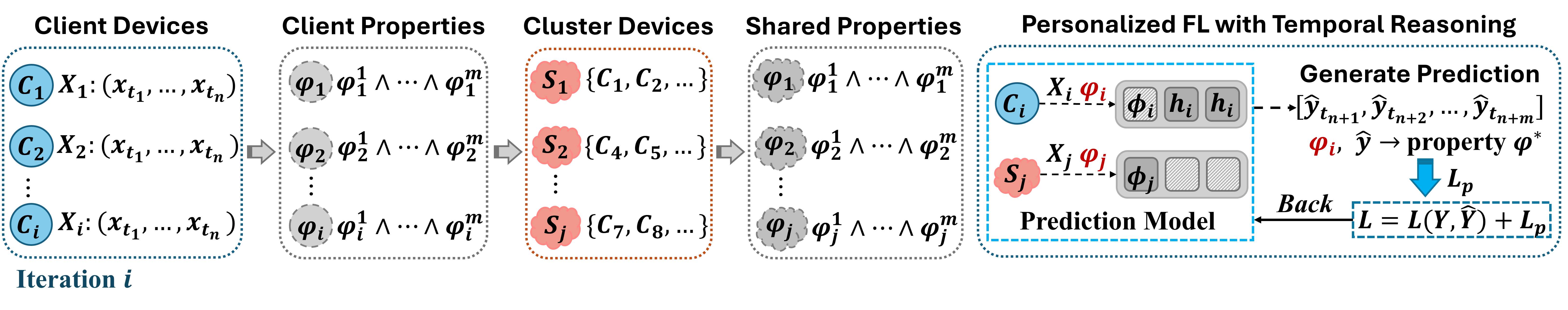}
    \caption{The training workflow for one iteration in the framework involves the inference of client logic properties $\{ \varphi_i\}$ and cluster logic properties $\{ \varphi_j\}$. Based on the alignment of these properties, clients are partitioned into clusters. Then, our framework enhances personalized FL by incorporating both client and cluster reasoning properties during training.}
    \label{fig:learning}
    \vspace{-0.5em}
\end{figure*}

\subsection{Enhancing STL-based Logic Reasoning Property}\label{sec:method-enhancing}

We consider the workflow illustrated in Figure~\ref{fig:learning}. During each training iteration, FedSTL utilizes Equation~\ref{eq:solvestl} to infer a logic reasoning property $\varphi$ from the client dataset $D_i$. The inferred property is then incorporated into the client prediction process through an additional loss term $\mathcal{L}_p$, which penalizes the neural network for any deviations from the property. This is achieved by implementing a teacher-student structure~\cite{hinton2015distilling}, which will be discussed in detail in subsequent sections. 

\begin{algorithm}[t]
\small
    \caption{\small CLUSTER\_ID: Cluster identity mapping}
    \label{alg:cluster}
    \begin{flushleft}
    \textbf{Parameters}: cluster devices $\{\mathcal{S}_j\}$, participating client models $\{\mathcal{C}_i\}$.\\
    \begin{algorithmic}[1] 
        \STATE Initialize client identity mapping $\mathcal{I}$.
        \\ \texttt{/* generate client data property */} 
        \FOR{client dataset $\mathcal{D}^s_i$}
            \STATE Generate client data property $\varphi^s_i$ on dataset $\mathcal{D}^s_i$
        \ENDFOR
        \\ \texttt{/* clusters select clients */} 
        \FOR{cluster device $\mathcal{S}_j$}
            \FOR{client data $\mathcal{D}^s_i$}
            \STATE Generate $\hat{\mathcal{Y}}$ with $\mathcal{S}_j$ and $\mathcal{D}^s_i$
            \STATE Calculate empirical logic reasoning loss $\mathcal{L}_p(\varphi^s_i, \hat{\mathcal{Y}})$
            \ENDFOR
            \STATE Cluster selects clients $\mathcal{C}_i$ with the lowest $\mathcal{L}_p$
            \STATE Append $\mathcal{C}_i$ to identity mapping $\mathcal{I}$. 
        \ENDFOR 
        \STATE \textbf{return} $\mathcal{I}$ (client identity mapping)
    \end{algorithmic}
    \end{flushleft}
\end{algorithm}

\paragraph{Client model prediction correction} 
We first describe how FedSTL framework regulates and corrects client prediction such that it satisfies the logic property $\varphi \coloneqq \varphi_1 \wedge \varphi_2 \wedge \ldots \wedge \varphi_n$ extracted by the logic inference module.

To begin with, recall that any locally inferred logic reasoning property must be satisfied by every data point in the client dataset. Additionally, any signal temporal logic formula can be represented by its equivalent Disjunctive Normal Form (DNF), which typically takes the form of $P \vee Q \vee R \vee \ldots$. Each clause in a DNF formula consists of either variables, literals, or conjunctions, for example, $P \coloneqq p_1 \wedge \neg p_2 \wedge \ldots \wedge p_{n}$. Essentially, the DNF form specifies a range of satisfaction where \textit{any} logic clause connected with the disjunction operator satisfies the STL formula $\varphi$.

Taking advantage of this fact, our framework leverages the DNF equivalents of automatically generated client properties. We aim to find a logic clause $\varphi^*$ that represents the closest approximation to the model prediction $\hat{\mathcal{Y}}$ in terms of satisfying the property~\cite{ma2020stlnet}. To achieve this, we introduce an additional loss term $\mathcal{L}_p(\varphi^*, \hat{\mathcal{Y}})$ that quantifies the distance between the model prediction and the closest satisfying trace. In practice, we use the L-1 distance as a metric to quantify this loss, capturing the absolute difference between the predicted and desired outputs.

\subsection{Dynamic Temporal Logic-Based Clustering}\label{sec:method-clustering}


One advantage of the FedSTL framework is its dynamic assignment of client models to clusters, which allows the aggregation process to adapt to changes in logic properties as they occur. At a high level, clients with similar temporal reasoning properties are grouped together in clusters, while clients with different properties are not aggregated together.

We demonstrate this process in Algorithm\ref{alg:cluster}, which is executed every $m$ communication rounds, where $m$ is a hyperparameter. Let ${\mathcal{D}^s_i}, {i \in \mathcal{C}}$ be a small sample of desensitized client data. During each clustering process, FedSTL generates the logic property $\varphi^s_i$ for each participating client on their respective datasets $\mathcal{D}^s_i$ (line 3, Alg.\ref{alg:cluster}). Then, each cluster device $\mathcal{S}_j$ generates predictions $\hat{\mathcal{Y}}$ on the samples in $\mathcal{D}^s_i$ (line 7, Alg.\ref{alg:cluster}). Next, we calculate the empirical logic reasoning loss $\mathcal{L}_p(\varphi^s_i, \hat{\mathcal{Y}})$ for each cluster model $\mathcal{S}_j$ on client dataset $\mathcal{D}^s_i$ (line 8, Alg.\ref{alg:cluster}). At the end of each round, we select cluster members based on the lowest logic reasoning loss $\mathcal{L}_p$ (line 10, Alg.~\ref{alg:cluster}).


\begin{algorithm}[t]
\small
    \caption{\small FedSTL: Client federation and update}
    \label{alg:fedstl}
    \begin{flushleft}
    \begin{algorithmic}[1] 
        \FOR{$t=1,2,\ldots, \mathcal{T}$} 
        \STATE $\{\mathcal{C}^{(t)}_i\} \leftarrow$ Selected clients with participation rate $r$
        \STATE $\mathcal{I}^{(t)} \leftarrow $  {\scshape CLUSTER\_ID}$(\{\mathcal{S}_j\}, \{\mathcal{C}^{(t)}_i\})$ 
        \STATE Clusters broadcast current model $\phi^{(t)}_j$ to clients
        \\ \texttt{/* update client models */} 
        \FOR{client $i$ in $\{\mathcal{C}^{(t)}_i\}$ \textit{in parallel} }
        \STATE Client $i$ initializes layers $\theta^{(t)}_i$
            \FOR{$t=1,2,\ldots,\tau$}
                \STATE $(\theta^{(t)}_i) \leftarrow \text{{\scshape SGD}}(\theta^{(t)}_i, F_i, \eta)$
            \ENDFOR
        \STATE Client $i$ sends shared $\phi^{(t)}_i$ to cluster $\mathcal{S}_{j}$
        \ENDFOR
        \\ \texttt{/* update cluster models */} 
        \FOR{cluster $j$ in $\{\mathcal{S}^{(t)}_j\}$ \textit{in parallel}}
            \STATE Cluster $j$ performs member aggregation:
            \\ \hskip1.0em $\phi^{(t)}_j \leftarrow$ {\scshape G}$(\phi^{(t)}_j, \{ \phi^{(t)}_{i} \})$
            \FOR{$t=1,2,\ldots,\kappa$}
                \STATE $(\phi^{(t+1)}_j) \leftarrow \text{{\scshape SGD}}(\phi^{(t)}_j, F_j, \eta_j)$
            \ENDFOR
        \ENDFOR
        \ENDFOR
        \STATE \textbf{return} $ { \{ \phi_j\}, \{ \theta_i \} }$ (updated models)
    \end{algorithmic}
    \end{flushleft}
\end{algorithm}

\subsection{Hierarchical Logic Reasoning Strengthening}\label{sec:method-training}
The pseudocode for the collaborative updating paradigm of FedSTL is presented in Algorithm~\ref{alg:fedstl}. The framework operates for a total of $\mathcal{T}$ communication rounds, and at each round, participating client models $\{\mathcal{C}^{(t)}_i\}$ are selected based on a pre-defined participation rate $r$ (line 2, Alg.~\ref{alg:fedstl}). Our approach employs a bi-level updating strategy, where client and cluster models are updated differently.

The client model parameters $\{\theta_i\}, i \in \mathcal{C}$ are divided into two groups: cluster-shared parameters $\{\phi_i\}$ and locally-private parameters $\{h_i\}$. The former are updated and transferred to the cluster devices, while the latter are retained on the client devices for personalization. For instance, in a recurrent neural network used for sequential prediction, the recurrent blocks can be designated as $\{\phi_i\}$ to capture the shared characteristics among cluster members. Meanwhile, the local client parameters $\{h_i\}$ can be a dense layer that maps the output of the recurrent layer to the prediction.

During each communication round, client models obtain the most recent client model $\theta_i^{(t)}$ by minimizing the local objective defined in Equation~\ref{eq:localobj} (line 8, Alg.~\ref{alg:fedstl}), where $\tau$ denotes the number of local updates for client models. Specifically, the local objective $F_i$ is defined as:
\begin{equation}\label{eq:localobj}
\mathop{\text{min}}_{{\theta_i}} \, F_i(\theta_i) \; \text{with} \; F_i(\theta_i)\coloneqq \mathcal{L} (\mathcal{Y}, \hat{\mathcal{Y}}) + \lambda \, \mathcal{L}_{p} (\varphi_i, \hat{\mathcal{Y}})
\end{equation}
Here, $\mathcal{L}$ is a local loss function, such as Mean Squared Error, and $\mathcal{L}_{p}$ is an additional loss function with respect to the client STL property $\varphi_i$. The hyperparameter $\lambda$ is used to control the strength of the property loss.

We employ stochastic gradient descent ({\scshape SGD}) as the optimization algorithm to update the neural network models (line 8, Alg.~\ref{alg:fedstl}), as shown in Equation~\ref{eq:sgd}, where $\eta$ denotes the step size for the gradient descent. The {\scshape SGD} update rule can be substituted with any other gradient descent-based algorithm. After local client updating, the shared layers $\{ \phi_i \}$ are uploaded to the cluster model (line 10, Alg~\ref{alg:fedstl}). 
\begin{equation}\label{eq:sgd}
(\theta^{(t)}_i) \leftarrow \text{{\scshape SGD}}(\theta^{(t)}_i, F_i, \eta)
\end{equation}

During the cluster updating rounds, each cluster model aggregates the updated layers from its members using the function {\scshape $G(\cdot)$} (line 13, Alg.~\ref{alg:fedstl}). In FedSTL, clusters compute $G(\cdot)$ directly as a weighted average. Finally, in order to further exploit the shared logic reasoning property among cluster devices, the framework performs $\kappa$ rounds of updates on the cluster models while enforcing the inducted STL constraint $\{ \varphi_j \}$, where $\varphi_j$ is a temporal reasoning property inducted for the $j^{th}$ cluster. This is done by optimizing the objective specified in Equation~\ref{eq:clustobj}, and the process is described in line 15 of Alg.~\ref{alg:fedstl}.
\begin{equation}\label{eq:clustobj}
\begin{gathered}
 \mathop{\text{min}}_{{\phi_j}} \, F_j(\phi_j), \; \text{with} \; F_j(\phi_j)\coloneqq \mathcal{L} (\mathcal{Y}, \hat{\mathcal{Y}}) + \lambda \, \mathcal{L}_{p} (\varphi_j, \hat{\mathcal{Y}}) \\ 
(\phi^{(t)}_j) \leftarrow \text{{\scshape SGD}}(\phi^{(t)}_j, F_j, \eta_j)
\end{gathered}
\end{equation}

Importantly, synchronizing client parameters with clusters does not require client personalization on $\{ h_{i} \}$ to be completed first, as the personalized layers are not shared among clients. This means that client models can continue to perform local personalization, even if they are selected to participate in a given communication round.

\begin{table*}[t]
\centering 
\small

\begin{tabular}{l|cc|cc|cc|cc}

\toprule

\multirow{3}{*}{\textbf{Method}}& \multicolumn{2}{c|}{\multirow{2}{*}{\textbf{RNN}}} & \multicolumn{2}{c|}{\multirow{2}{*}{\textbf{GRU}}} & \multicolumn{2}{c|}{\multirow{2}{*}{\textbf{LSTM}}} & \multicolumn{2}{c}{\multirow{2}{*}{\textbf{Transformer}}} \\
                                 & \multicolumn{2}{l|}{}                              & \multicolumn{2}{l|}{}                              & \multicolumn{2}{l|}{}   \\ 

                                 & MSE   & $\rho \%$ & MSE & $\rho \%$ & MSE & $\rho \%$ & MSE & $\rho \%$ \\ 
\midrule
\multirow{1}{*}{\textbf{FedAvg}}    & .128$\pm$.032 & 78.96$\pm$1.03 & .154$\pm$.031 & 80.51$\pm$0.75 & .126$\pm$.034 & 81.80$\pm$0.91 & .588$\pm$.005 & 78.01$\pm$0.62\\ 

\midrule
\multirow{1}{*}{\textbf{FedProx}}   & .128$\pm$.032 & 78.86$\pm$1.02 & .154$\pm$.032 & 80.43$\pm$0.74 & .126$\pm$.034 & 81.93$\pm$0.89 & .588$\pm$.005 & 78.05$\pm$0.63\\ 
\midrule
\multirow{1}{*}{\textbf{FedRep}}   & .164$\pm$.033 & 80.04$\pm$1.00 & .279$\pm$.029 & 80.08$\pm$0.01 & .214$\pm$.031 & 81.08$\pm$0.08 & .929$\pm$.004 & 57.01$\pm$0.48\\ 

\midrule
\multirow{1}{*}{\textbf{Ditto}}    & .124$\pm$.031 & 79.17$\pm$0.01 & .153$\pm$.032 & 80.41$\pm$0.74 & .128$\pm$.035 & 81.48$\pm$0.84 & .591$\pm$.005 & 78.05$\pm$0.63\\ 

\midrule
\multirow{1}{*}{\textbf{IFCA}} & .117$\pm$.031 & 77.89$\pm$0.95 & .140$\pm$.034 & 77.47$\pm$0.75 & .121$\pm$.034 & 80.41$\pm$0.87 & .063$\pm$.004 & 73.43$\pm$0.56\\ 

\multirow{1}{*}{\textbf{IFCA-S}} & .107$\pm$.032 & 78.89$\pm$0.96 & .134$\pm$.033 & 77.66$\pm$0.80 & .110$\pm$.035 & 81.51$\pm$0.90 & .061$\pm$.004 & 72.79$\pm$0.56\\ 

\midrule

\multirow{1}{*}{\textbf{FedSTL-S}} & .096$\pm$.026 & 81.67$\pm$0.93 & .148$\pm$.031 & 81.71$\pm$0.76 & .111$\pm$.030 & 83.44$\pm$0.87 & \textbf{.025$\pm$.003} & 77.03$\pm$0.93\\ 
                              
\multirow{1}{*}{\textbf{FedSTL}} & .095$\pm$.026 & 81.70$\pm$0.98 & .152$\pm$.031 & 81.83$\pm$0.71 & .119$\pm$.031 & 83.32$\pm$0.92 & .029$\pm$.003 &  78.99$\pm$0.66\\ 

\multirow{1}{*}{\textbf{FedSTL-T}} & \textbf{.076$\pm$.022} & \textbf{100.0$\pm$0.00} & \textbf{.118$\pm$.027} & \textbf{100.0$\pm$0.00} & \textbf{.099$\pm$.026} & \textbf{100.0$\pm$0.00} & .287$\pm$.013 & \textbf{100.0$\pm$0.00}\\
\bottomrule

\end{tabular}

\caption{Comparison on MSE and locally-distinctive property satisfaction.}
\vspace{-1em}
\label{tab:fhwares}
\end{table*}

\section{Evaluation}\label{sec:eval}
Our evaluations revolve around the following primary objectives:
(1) Enhancing personalized FL by incorporating \textit{locally-specific logic reasoning} properties into real-world sequential prediction datasets.
(2) Integrating \textit{intra-task symbolic reasoning} into multitask personalized FL training objectives and assessing its effectiveness in diverse client configurations.
(3) Highlighting the advantages of client model personalization enabled by our method, and comparing the results with other existing FL approaches.
The experiments were conducted on a machine equipped with an Intel Core i9-10850K CPU and an NVIDIA GeForce RTX 3070 GPU. The operating system used was Ubuntu 18.04.

\paragraph{Experiment Setup}
We evaluate the performance of FedSTL in two distinct scenarios: (1) a synthetic multivariate large-scale smart city dataset, and (2) a real-world univariate highway traffic volume dataset.
For baseline comparisons, we utilize three different backbone networks: a vanilla RNN, a GRU model, a transformer model, and an LSTM model. 
During each round of FL communication, we randomly select 10\% of the client devices to participate. For all the conducted experiments and algorithms, we use SGD with consistent learning rates and a batch size of 64.

\paragraph{Baseline Methods}
We compare the performance of FedSTL with the following methods: 
(1) \textit{FedAvg}~\cite{mcmahan2017communication} is a widely-used FL algorithm that trains a global model by aggregating the weighted average of client models;
(2) \textit{FedProx}~\cite{li2020federated} is a generalization of FedAvg that addresses system and statistical heterogeneity with a re-parametrization technique; 
(3) \textit{FedRep}~\cite{collins2021exploiting} is a personalized FL algorithm that learns shared global representations with unique local heads for each client; 
(4) \textit{Ditto}~\cite{li2021ditto} is a personalized FL method that incorporates regularization techniques to enhance the fairness and robustness;
(5) \textit{IFCA}~\cite{ghosh2020efficient} is a clustering FL algorithm that iteratively groups participating clients based on their training goals to promote collaboration among clients with similar objectives. 
We set the number of local epochs to 10 for FedAvg, FedProx, FedRep (with 8 head epochs), Ditto, and IFCA. Additionally, for FedSTL, we employ 6 local epochs and 4 cluster training epochs. 

\paragraph{Evaluation Metrics}
We utilize mean squared error (MSE) as our metric to evaluate the network performance. In addition, we introduce a measure called the satisfaction rate ($\rho \%$) to evaluate the impact of FedSTL on client property satisfaction. 
Specifically, we define $\rho \%$ as the percentage of network predictions, denoted by $\hat{\mathcal{Y}} = (y_{n+1}, \ldots, y_{n+m} )$, that satisfy a given property $\varphi$, induced by the input sequence $\mathcal{X} = ( x_1,\ldots, x_{n} )$. This allows us to quantify the degree to which the predicted sequence $\hat{\mathcal{Y}}$ adheres to the specified property $\varphi$ based on the input sequence $\mathcal{X}$.

\paragraph{Enhancing Locally-Specific Logic Reasoning Properties}
In our first task, we enhance personalized federated learning (FL) by \textit{incorporating locally-distinct temporal properties} using real highway traffic data. We specifically focus on the operational range property, as outlined in Table~\ref{tab:stl-case}, which captures important aspects of the traffic volume dynamics for each client during two-hour windows. 
We obtain a publicly available dataset from the Federal Highway Administration~\cite{hpmsfield} and preprocess hourly traffic volume from 15 states. 
Further, we design a testing scenario where a neural network is trained to predict the traffic volume for the next 24 consecutive hours based on the past traffic volume at a location over the previous five days.

\paragraph{Enhancing Intra-Task Symbolic Properties}\label{sec:multi}
In our second task, the objective is to enhance personalized federated learning (FL) by incorporating \textit{intra-task symbolic reasoning properties}, where the two variables were the number of vehicles on the road and the occupancy of the same road. To achieve this, we create a simulated dataset using SUMO (Simulation of Urban MObility) \cite{krajzewicz2002sumo}, a large-scale open-source road traffic simulator. The learning objective in this task is to predict a multivariate traffic and pollution scenario.
Moreover, we focus on diverse road types and consider a traffic scenario that includes cars, trucks, and motorcycles. From the available road segments, we select 100 segments to serve as FL clients. For each client, we record various features including vehicle counts, road occupancy, mean speed, carbon dioxide emission, average fuel consumption, and noise emission.

\paragraph{Results and Discussion}
Table~\ref{tab:fhwares} presents the results of FedSTL in enhancing locally-distinctive reasoning properties, where ``-S" indicates the evaluation on the cluster model, and ``-T" indicates the evaluation on the teacher model. The performance of FedSTL surpasses that of various other FL methods, both personalized and non-personalized. 
Specifically for FedSTL, the ``Cluster" row demonstrates the effectiveness of our clustering method, while the ``Client" row represents the performance on our client devices prior to prediction correction by the teacher. In contrast, the ``Teacher" row shows the framework's performance after the prediction is corrected by the teacher. Figure~\ref{fig:res} illustrates the comparison results on MSE by enhancing intra-task reasoning properties. The flowpipe representation is used to indicate the error bars. 

We observed a significant improvement in the MSE for RNN models, with up to a 54\% reduction compared to the baseline. Similarly, GRU models exhibited a 53.8\% lower MSE, and LSTM models achieved up to a 53.6\% lower MSE. Furthermore, the teacher model within the FedSTL framework consistently corrected predictions with a 100\% satisfaction rate across all cases.

By enhancing locally-distinct properties for FL clients, we observe a substantial improvement in the model's prediction performance, as indicated by both the MSE and satisfaction rate metrics.
When comparing FedAvg, FedProx, and Ditto, we find that their MSE values are generally similar. However, FedRep exhibits relatively poorer performance, while IFCA consistently outperforms the other methods. When locally-distinct properties are incorporated, FedSTL surpasses IFCA in terms of predictive accuracy. Notably, the teacher component of FedSTL demonstrates the best performance, with significantly lower MSE and a higher satisfaction rate for the properties.
These findings indicate a promising trend: by correcting predictions based on localized properties, we can achieve a significant improvement in the model's accuracy. Additionally, in the context of enhancing intra-task properties, both FedSTL and IFCA show superior performance compared to other baselines. These results underscore the importance of aligning client training objectives to enhance the overall performance of the model.

\begin{figure}[h]
\centering
    \includegraphics[width=0.4\textwidth]{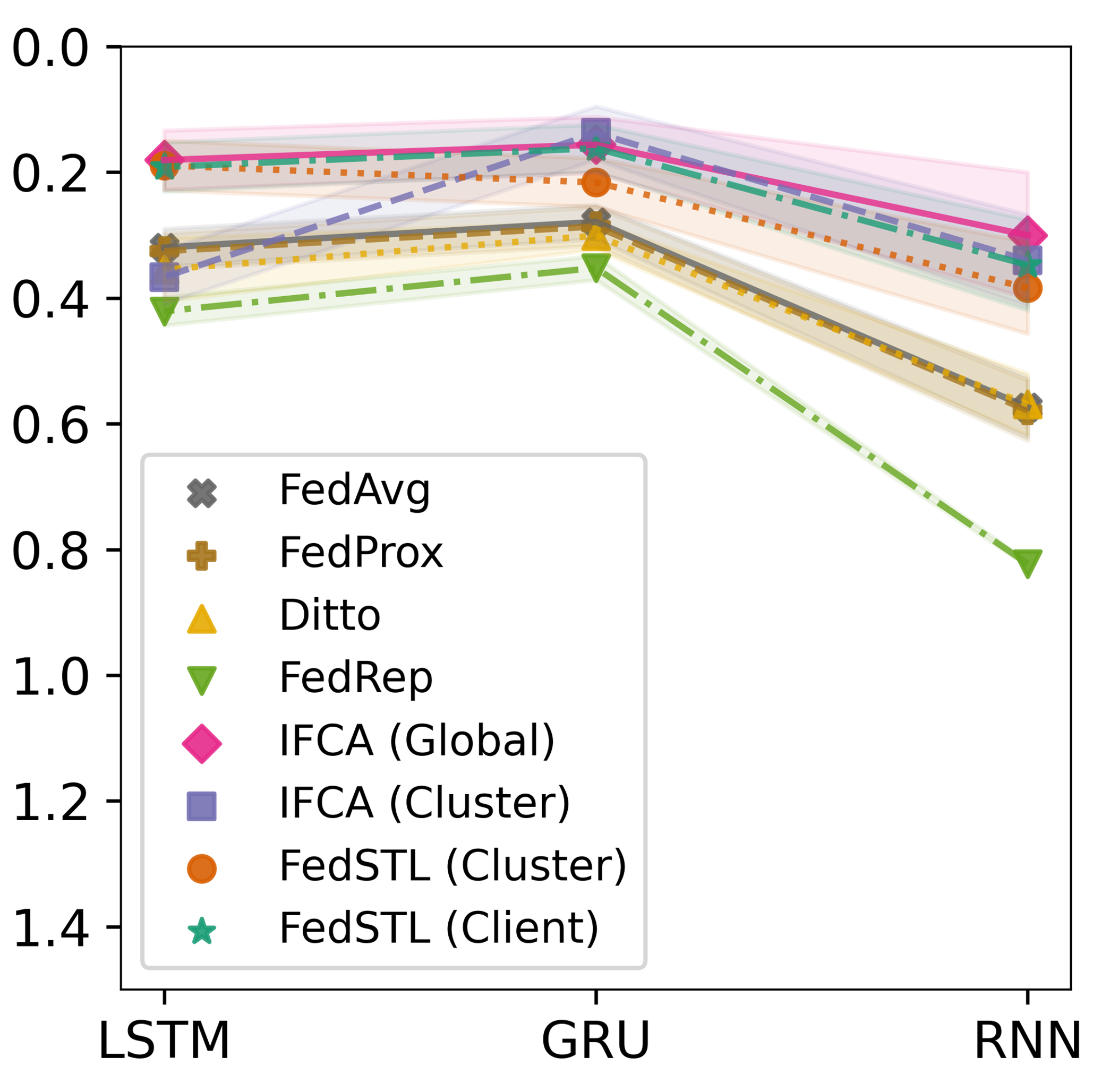}
    \caption{Comparison on MSE by enhancing intra-task reasoning properties. A higher position on the y-axis indicates a smaller MSE value.}
    \label{fig:res}
    \vspace{-1em}
\end{figure}
\section{Related Work}\label{sec:related}
In contrast to traditional FL frameworks, personalized FL prioritizes the training of local models tailored to individual clients, rather than relying on a single global model that performs similarly across all clients~\cite{tan2022towards, fallah2020personalized, collins2021exploiting,ghosh2020efficient,arivazhagan2019federated,mansour2020three}. 
Deng et al.~\cite{deng2020adaptive} highlight the significance of personalization in FL algorithms, particularly when dealing with non-i.i.d. client datasets. 
In light of this, our work focuses on investigating the potential benefits of incorporating symbolic reasoning through formal specification to enhance personalized FL algorithms. Specifically, we leverage rigorous and formal logic properties to improve the predictions of neural networks. This approach aligns with the concept of informed machine learning, which integrates auxiliary domain knowledge into the machine learning framework, as emphasized in a comprehensive survey by Von Rueden et al.~\cite{von2021informed}.  Generally, such methods are critical in improving the performance of data-driven models across various aspects, as shown in previous related works~\cite{muralidhar2018incorporating, ma2021predictive, diligenti2017integrating, jia2021physics,ma2020stlnet,hu2020harnessing,10.1145/3576841.3589633}. 

\section{Summary and Future Work}\label{sec:conclusion}
Previously, personalized FL methods have been developed to address the challenge of heterogeneous client devices. However, these methods have largely overlooked the potential of symbolic reasoning in tackling this issue. To bridge this gap, our study explores the effectiveness of incorporating symbolic reasoning into personalized FL. Our evaluation results demonstrate a significant improvement in both client prediction error and property satisfaction when leveraging induced client device properties. Furthermore, we observe an interesting phenomenon where promoting the satisfaction of desired properties also leads to a reduction in error rates. These promising findings highlight the benefits of equipping deep learning models with symbolic reasoning capabilities. 

Moreover, while our primary focus was on evaluating prediction accuracy and property satisfaction, there is potential to extend this work to investigate the application of symbolic reasoning-enabled learning in a broader range of scenarios, such as the field of healthcare. 
Specifically, AI in healthcare faces unique challenges related to privacy, trust, and safety. 
Future research could explore how incorporating symbolic reasoning into learning algorithms can ensure trustworthy and safe AI-enabled predictions in healthcare settings.
Furthermore, by regulating model predictions with logic properties, post-hoc explainability can naturally emerge. 
Although our study is limited to evaluating prediction accuracy and property satisfaction, there is ample room for future research to explore the application of symbolic reasoning in other domains and harness the benefits, such as explainability, robustness, fairness, and the ability to handle uncertainties. 

\section*{Acknowledgments}
This material is based upon work supported by the National Science Foundation (NSF) under Award Numbers 2028001 and 2220401, AFOSR under FA9550-23-1-0135, and DARPA under FA8750-23-C-0518. 


\bibliography{aaai24}

\newpage
\clearpage
\appendix\markboth{Appendix}{Appendix}

\section*{Technical Appendix}

\section{Proof of Theorem}
In this section we briefly analyze the theory behind the proposed method. We first show that the local objective function $F_i$ in equation \ref{eq:localobj} is strongly convex with certain assumptions.

\begin{theorem}
    For any positive number $\lambda >0$, objective function $F_i(\theta_i) = \mathcal{L} (\mathcal{Y}, \hat{\mathcal{Y}}) + \lambda \, \mathcal{L}_{p} (\varphi_i, \hat{\mathcal{Y}})$ is strongly convex if the following conditions hold.
    
    \begin{enumerate}
        \item For all $i \in \mathcal{C}$, the STL properties $\varphi_i$ only includes $x \sim c$, $\land$ and $\square$.
        \item The original loss function $\mathcal{L}$ is strongly convex.
    \end{enumerate} 
    \label{theorem:convexity}
\end{theorem}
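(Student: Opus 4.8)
The plan is to reduce the claim to a single convexity statement by invoking the elementary fact that the sum of a strongly convex function and a convex function is strongly convex: if $\mathcal{L}$ is $m$-strongly convex, then $\mathcal{L} - \tfrac{m}{2}\|\cdot\|^2$ is convex, and adding any convex term keeps this difference convex, so $F_i$ inherits the modulus $m$. Since condition 2 supplies strong convexity of $\mathcal{L}(\mathcal{Y},\hat{\mathcal{Y}})$ and $\lambda>0$, the entire theorem collapses to proving that the property loss $\mathcal{L}_p(\varphi_i,\hat{\mathcal{Y}})$ is convex in the prediction $\hat{\mathcal{Y}}$. Thus the real work is geometric: understand the set of traces satisfying $\varphi_i$ and the shape of the L-1 distance to it.

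First I would characterize the satisfaction set $S_{\varphi_i}=\{\hat{\mathcal{Y}}:\hat{\mathcal{Y}}\models\varphi_i\}$ and argue that it is convex. This is exactly where condition 1 is used: the admitted syntax consists only of atomic predicates $x\sim c$, conjunction $\wedge$, and the always operator $\square$, which are precisely the operators that preserve convexity. Each atomic predicate $x\leq c$ or $x\geq c$ carves out a halfspace in trace space, hence a convex set; over a bounded prediction horizon $\square_{[a,b]}$ unfolds into a finite conjunction of the inner formula across the time indices in $[a,b]$; and $\wedge$ corresponds to set intersection. Since intersections of convex sets are convex, $S_{\varphi_i}$ is convex. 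I would stress that this is where the hypothesis genuinely bites: admitting $\vee$, $\Diamond$, or $\mathcal{U}$ would force unions of halfspaces — equivalently, the DNF would contain more than one clause — and the ``closest satisfying trace'' would then be taken over a non-convex union, breaking the argument.

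Next I would show that $\mathcal{L}_p$, realized as the L-1 distance $d(\hat{\mathcal{Y}},S_{\varphi_i})=\min_{y\in S_{\varphi_i}}\|\hat{\mathcal{Y}}-y\|_1$ from the prediction to the now-convex satisfaction set, is a convex function of $\hat{\mathcal{Y}}$. This is the standard lemma that the distance to a convex set under any norm is convex: given two predictions and a convex combination, one selects near-optimal feasible points, uses convexity of $S_{\varphi_i}$ to keep their convex combination feasible, and applies the triangle inequality for $\|\cdot\|_1$. Chaining this with condition 2 and the sum rule then delivers strong convexity of $F_i$.

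The main obstacle I anticipate is the step that the theorem only implicitly addresses, namely passing from convexity in the output $\hat{\mathcal{Y}}$ to convexity in the parameters $\theta_i$ as literally written. For a general nonlinear network the map $\theta_i\mapsto\hat{\mathcal{Y}}(\theta_i)$ need not preserve convexity, so a convex function of $\hat{\mathcal{Y}}$ need not be convex in $\theta_i$. I would resolve this either by restricting the analysis to the prediction space (treating $\hat{\mathcal{Y}}$ as the optimization variable, the natural reading since $\mathcal{L}_p$ acts on traces), or by assuming the prediction map is affine in $\theta_i$. The remaining ingredients — the halfspace-intersection argument and the distance-to-convex-set lemma — are routine and I would not belabor them.
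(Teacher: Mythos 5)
Your proposal is correct and follows essentially the same route as the paper's own proof: a structural induction showing that under the restricted syntax the satisfaction set $S_{\varphi_i}$ is an intersection of halfspaces and hence convex, followed by convexity of the $L_1$ distance to a convex set, followed by the sum rule for strong convexity. Your added caveat about convexity in $\hat{\mathcal{Y}}$ versus in $\theta_i$ identifies a gap the paper's proof also leaves unaddressed, and your proposed resolutions (work in prediction space, or assume an affine prediction map) are the appropriate ones.
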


To prove this, we first show the additional STL loss is convex.
\begin{lemma}
When STL properties $\varphi_i, i \in \mathcal{C}$ only composes $\land$ and $\square$, $\mathcal{L}_p(\varphi_i, \hat{Y})$ is convex.
\end{lemma}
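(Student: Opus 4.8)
The plan is to read off from the definition of the property loss in the section on enhancing STL-based reasoning that $\mathcal{L}_p(\varphi_i,\hat{Y})$ is the $L_1$ distance from the prediction $\hat{Y}$ to the set of traces satisfying $\varphi_i$, namely $\mathcal{L}_p(\varphi_i,\hat{Y}) = \inf_{Z \in S_{\varphi_i}} \lVert \hat{Y}-Z\rVert_1$ with $S_{\varphi_i} = \{Z : Z \models \varphi_i\}$. The argument then decomposes into two independent claims: (i) the restricted fragment $\{x\sim c, \wedge, \square\}$ forces $S_{\varphi_i}$ to be convex; and (ii) the $L_1$ distance to any convex set is a convex function of $\hat{Y}$.

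For (i) I would argue by structural induction on $\varphi_i$. An atomic predicate $x\sim c$ bounds a single coordinate of the trace by a threshold, so its satisfaction set is a halfspace and hence convex. The operator $\square_{[a,b]}$ requires its argument to hold at every time index in $[a,b]$, which over a discrete trace is a finite conjunction, so the satisfaction set is an intersection of the (convex) sets attached to each time index; conjunction $\varphi_1 \wedge \varphi_2$ similarly produces $S_{\varphi_1}\cap S_{\varphi_2}$. Since halfspaces are convex and intersection preserves convexity, every formula in the fragment has a convex satisfaction set --- indeed a polyhedron. Equivalently, in the DNF machinery used to define $\mathcal{L}_p$, a $\{\wedge,\square\}$ formula has a single disjunct, so the ``closest satisfying clause'' $\varphi^*$ is unique and its satisfaction region is one convex polytope. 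The essential point is that the fragment excludes $\neg$, $\vee$, $\Diamond$, and $\mathcal{U}$, each of which would introduce a disjunction (a union of regions) and thereby destroy convexity.

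For (ii) I would invoke the classical fact that $Z\mapsto \operatorname{dist}(Z,C)=\inf_{W\in C}\lVert Z-W\rVert$ is convex whenever $C$ is convex, for any norm. The verification takes $\hat{Y}_1,\hat{Y}_2$ and $\lambda\in[0,1]$, chooses near-minimizers $W_1,W_2\in C$, uses $\lambda W_1+(1-\lambda)W_2\in C$ together with the triangle inequality and absolute homogeneity of the norm, and sends the approximation error to zero. Applying this with $C=S_{\varphi_i}$ (convex by (i)) and the $L_1$ norm gives convexity of $\mathcal{L}_p(\varphi_i,\hat{Y})$ in $\hat{Y}$, which is the lemma.

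The step I expect to be the main obstacle is making (i) airtight: precisely matching the quantitative STL semantics that the implementation uses to compute $\mathcal{L}_p$ with the set-distance formulation, and confirming that the targeted ``closest satisfying trace'' really is a metric projection onto $S_{\varphi_i}$. One must also check that the predicates are genuinely affine thresholds, so that the atomic sets are true halfspaces --- precisely what the restriction to $x\sim c$ buys us, since a nonlinear predicate could yield a nonconvex sublevel set even inside this operator fragment.
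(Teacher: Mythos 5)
Your proposal is correct and follows essentially the same route as the paper's proof: a structural induction showing the satisfaction set $S_{\varphi_i}$ is an intersection of halfspaces (hence convex) for the $\{x\sim c,\wedge,\square\}$ fragment, followed by the fact that the $L_1$ distance to a convex set is convex. You additionally spell out the proof of that last fact and the connection to the DNF machinery, both of which the paper merely asserts, but the decomposition and key ideas are identical.
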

\begin{proof}
Consider the set  $S_{\varphi_i} = \left\{Y \mid Y \models  \varphi_i\right\}$ as the region of predictions that satisfies $\varphi_i$. We first show this set $S_{\varphi_i} $ is convex given condition. To show that, we use the induction method
\begin{itemize}
    \item Case $Y^{(k)} \sim c$: The region $\{Y \mid Y^{(k)} \geq c \}$ and $\{Y \mid Y^{(k)} \leq c \}$ are both half-planes, which are convex.
    \item Case $\varphi = \varphi_i \land \varphi_2$. The set $S_\varphi = \left\{Y \mid Y \models  \varphi\right\}$ is the intersection of two sets $S_{\varphi_1}$ and  $S_{\varphi_1}$. By induction,  $S_{\varphi_2}$  and $S_{\varphi_2}$ are both convex, which indicates $S_\varphi$ is also convex.
    \item Case $\varphi = \square_I \varphi_i$. Similar to the $\land$ case, the set $S_\varphi = \left\{Y \mid Y \models  \varphi\right\}$ is the intersection of sets $S_{\varphi_t}$ where $t\in I$. By induction,  
 each $S_{\varphi_t}$ is convex, which indicates the intersection set $S_\varphi$ is also convex.
\end{itemize}

Finally, $\mathcal{L}_p(\varphi_i, \hat{Y}) = \mathcal{L}_1(\hat{Y}, S_{\varphi_i} )$ is $\mathcal{L}_1$ distance to a convex set, which is also convex.
\end{proof}

Now we can then show the local objective function $F_i(\theta_i)$ is strong convex.
\begin{proof}
    As $\lambda > 0$ and $ \mathcal{L}_{p} (\varphi_i, \hat{\mathcal{Y}})$ is a  convex function, $\lambda \, \mathcal{L}_{p} (\varphi_i, \hat{\mathcal{Y}})$ is also convex.
    $F_i(\theta_i) = \mathcal{L} (\mathcal{Y}, \hat{\mathcal{Y}}) + \lambda \, \mathcal{L}_{p} (\varphi_i, \hat{\mathcal{Y}})$ is a summation of a strongly convex function $\mathcal{L} (\mathcal{Y}, \hat{\mathcal{Y}})$ and a convex  function $\lambda \, \mathcal{L}_{p} (\varphi_i, \hat{\mathcal{Y}})$. By the definition of strong convexity, $F_i(\theta_i)$ is strongly convex.
\end{proof}

We can then analyze the convergence of FedSTL. Because our loss function is strongly convex, a well known result is that SGD has convergence on strong convex function\cite{shamir2013stochastic}. Specifically,

\begin{theorem}[Convergence of stochastic gradient descent\cite{shamir2013stochastic}]
For any strongly convex function $f(x)$, the expected function value $\mathbf{E}[f_T(x)]$ after performed SGD for $T$ steps satisfies $$ \mathbf{E}\left[f_T(x)-f^*\right] \leq C_1 \cdot \frac{1+\log(T)}{T},$$ where $C_1$ is a constant that depends on the function and $f^*$ is the global minimal function value.
\label{theorem:sgd_convergence}
\end{theorem}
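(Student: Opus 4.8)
The plan is to treat Theorem~\ref{theorem:sgd_convergence} as an instantiation of the classical analysis of stochastic gradient descent on a strongly convex objective, since the statement is quoted verbatim from~\cite{shamir2013stochastic}; my goal is a self-contained sketch of that argument specialized to the setting where $f$ is the strongly convex objective guaranteed by Theorem~\ref{theorem:convexity}. First I would fix the standing assumptions that make the bound meaningful: that $f$ is $\mu$-strongly convex, that at each iterate $x_t$ the algorithm has access to an unbiased stochastic (sub)gradient $g_t$ with $\mathbb{E}[g_t \mid x_t] \in \partial f(x_t)$, and that its second moment is uniformly bounded, $\mathbb{E}\|g_t\|^2 \le G^2$. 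The step size is taken to be the standard strongly convex schedule $\eta_t = 1/(\mu t)$.

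The core of the argument is a one-step recursion on the squared distance to the minimizer $x^*$. Expanding the update $x_{t+1} = x_t - \eta_t g_t$ gives
\[
\|x_{t+1}-x^*\|^2 = \|x_t-x^*\|^2 - 2\eta_t\langle g_t,\, x_t-x^*\rangle + \eta_t^2\|g_t\|^2.
\]
Taking conditional expectation and invoking the strong convexity inequality $\langle \nabla f(x_t),\, x_t-x^*\rangle \ge f(x_t)-f^* + \tfrac{\mu}{2}\|x_t-x^*\|^2$, I would rearrange to isolate the optimality gap:
\[
\mathbb{E}[f(x_t)-f^*] \le \Big(\tfrac{1}{2\eta_t}-\tfrac{\mu}{2}\Big)\mathbb{E}\|x_t-x^*\|^2 - \tfrac{1}{2\eta_t}\mathbb{E}\|x_{t+1}-x^*\|^2 + \tfrac{\eta_t}{2}G^2.
\]
Substituting $\eta_t = 1/(\mu t)$ makes the two distance coefficients coincide across consecutive indices, so summing over $t=1,\dots,T$ telescopes the distance terms (the $t=1$ term vanishes), and the residual step-size contribution is $\tfrac{G^2}{2\mu}\sum_{t=1}^T \tfrac{1}{t} \le \tfrac{G^2}{2\mu}(1+\log T)$. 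Dividing by $T$ and applying Jensen's inequality to the averaged iterate then yields the claimed $C_1\,(1+\log T)/T$ bound with $C_1 = G^2/(2\mu)$.

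The main obstacle — and precisely the technical content of~\cite{shamir2013stochastic} — is that the clean telescoping above bounds a (suitably weighted) \emph{average} of the iterates rather than the final output $f_T$ appearing in the statement. I would therefore either adopt the weighted/suffix-averaging scheme of~\cite{shamir2013stochastic}, for which the identical algebra delivers the stated rate directly, or invoke their sharper last-iterate analysis that tracks $\mathbb{E}[f(x_t)-f^*]$ over a decaying window to transfer the average bound to $x_T$ at the cost of the logarithmic factor. Since the result is used here only as a black box to conclude convergence of FedSTL's strongly convex subproblems $F_i$ and $F_j$, I would keep this final step at the level of citing the averaging lemma rather than re-deriving it.
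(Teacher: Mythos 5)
The paper offers no proof of Theorem~\ref{theorem:sgd_convergence} at all: it is imported verbatim from \cite{shamir2013stochastic} and used purely as a black box, so there is nothing to compare your argument against line by line. Your sketch of the standard strongly convex SGD analysis is correct as far as it goes --- the one-step expansion of $\|x_{t+1}-x^*\|^2$, the strong convexity lower bound on $\langle \nabla f(x_t), x_t - x^*\rangle$, the step size $\eta_t = 1/(\mu t)$ making the coefficients $\mu(t-1)/2$ and $\mu t/2$ telescope, and the harmonic sum producing the $1+\log T$ factor are all the right ingredients, and you correctly identify that this chain only bounds the \emph{averaged} iterate via Jensen. You are also right that closing the gap to the last iterate $x_T$ (which is what the theorem as stated, and the paper's use of it, actually requires) is the genuine technical contribution of \cite{shamir2013stochastic}, and deferring that suffix-averaging/last-iterate argument to the citation is a reasonable and honest place to stop; just be aware that your derivation alone does not prove the statement as written, and that the bound additionally presupposes the unbiased, bounded-second-moment gradient oracle assumptions you state, which the paper never verifies for its own losses $F_i$ and $F_j$.
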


We then formally prove the convergence of a client.
\begin{lemma}
With the assumptions in Theorem \ref{theorem:convexity} and further assume $\mathcal{L}$ is realizable (that is, $\exists\theta_i = \theta_i^*$ that $\mathcal{L} (\mathcal{Y}, \hat{\mathcal{Y}}) = 0$), for any $\epsilon > 0$, there exists a $\tau>0$ that after $\tau$ steps of SGD update, we have the expected original loss of the client $\mathbf{E}[\mathcal{L} (\mathcal{Y}, \hat{\mathcal{Y}})] \leq \epsilon$. In other words, the loss can be arbitrarily small.
\end{lemma}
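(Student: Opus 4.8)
The plan is to bound the original loss $\mathcal{L}$ above by the full objective $F_i$ and then invoke the SGD convergence rate of Theorem~\ref{theorem:sgd_convergence} applied to $F_i$, which is strongly convex by Theorem~\ref{theorem:convexity}. The essential preliminary step is to show that the global minimum value $F_i^*$ is exactly $0$; once this is established, the strongly convex SGD bound controls $\mathbf{E}[F_i]$ directly, and since $\mathcal{L} \le F_i$ it also controls $\mathbf{E}[\mathcal{L}]$.

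First I would argue that $F_i^* = 0$. By the realizability assumption there is a parameter $\theta_i^*$ with $\mathcal{L}(\mathcal{Y}, \hat{\mathcal{Y}}) = 0$, i.e.\ the prediction $\hat{\mathcal{Y}}$ coincides with the ground-truth labels $\mathcal{Y}$ at $\theta_i^*$. Because the property $\varphi_i$ is mined from the client data so as to be satisfied by those very observations (Definitions~\ref{def:stlinf}--\ref{def:stltight}), the ground truth $\mathcal{Y}$ satisfies $\varphi_i$, hence $\mathcal{L}_p(\varphi_i, \hat{\mathcal{Y}}) = \mathcal{L}_p(\varphi_i, \mathcal{Y}) = 0$ at $\theta_i^*$. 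Therefore $F_i(\theta_i^*) = \mathcal{L} + \lambda \, \mathcal{L}_p = 0$. Since both $\mathcal{L}$ (an MSE) and $\mathcal{L}_p$ (an $\mathcal{L}_1$ distance to a set) are non-negative, $F_i \ge 0$ everywhere, so $\theta_i^*$ is a global minimizer and $F_i^* = 0$.

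Next I would apply Theorem~\ref{theorem:sgd_convergence}. Since $F_i$ is strongly convex, after $T$ SGD steps
\[
\mathbf{E}\!\left[F_i(\theta_i^{(T)})\right] = \mathbf{E}\!\left[F_i(\theta_i^{(T)}) - F_i^*\right] \le C_1 \cdot \frac{1 + \log(T)}{T}.
\]
Using $\lambda \, \mathcal{L}_p \ge 0$, we have the pointwise bound $\mathcal{L} \le \mathcal{L} + \lambda \, \mathcal{L}_p = F_i$, and taking expectations gives $\mathbf{E}[\mathcal{L}] \le \mathbf{E}[F_i] \le C_1 (1 + \log T)/T$. Because $(1 + \log T)/T \to 0$ as $T \to \infty$, for any $\epsilon > 0$ I can pick $\tau$ large enough that $C_1 (1 + \log \tau)/\tau \le \epsilon$, yielding $\mathbf{E}[\mathcal{L}(\mathcal{Y}, \hat{\mathcal{Y}})] \le \epsilon$ after $\tau$ steps.

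The main obstacle is the first step, establishing $F_i^* = 0$: the realizability assumption only guarantees that the data-fit term $\mathcal{L}$ vanishes, and one must separately confirm that the property-penalty term $\mathcal{L}_p$ vanishes at the same $\theta_i^*$. This hinges on the fact that $\varphi_i$ is inferred to be consistent with the training labels, so that a perfect fit automatically satisfies the property; without it, $F_i^*$ would carry a nonzero floor $\lambda \min \mathcal{L}_p$ and the bound $\mathbf{E}[\mathcal{L}] \le \mathbf{E}[F_i]$ would no longer drive $\mathbf{E}[\mathcal{L}]$ to zero. The remaining steps are routine applications of the cited strong-convexity and SGD-convergence results.
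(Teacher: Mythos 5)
Your proposal is correct and follows essentially the same route as the paper's own proof: establish that $F_i(\theta_i^*)=0$ by combining realizability with the fact that the mined property $\varphi_i$ is satisfied by the training data (so $\mathcal{L}_p$ also vanishes at $\theta_i^*$), then apply the strongly convex SGD rate to $F_i$ and use $\mathcal{L}\leq F_i$. The only cosmetic difference is that the paper exhibits an explicit $\tau=(2C_1/\epsilon)^2$ via the bound $1+\log\tau<2\sqrt{\tau}$, whereas you argue by letting $\tau\to\infty$; your justification of why $\mathcal{L}_p$ vanishes at $\theta_i^*$ is in fact spelled out more carefully than in the paper.
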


\begin{proof}
    By Theorem \ref{theorem:convexity}, objective $F_i(\theta_i)= \mathcal{L} (\mathcal{Y}, \hat{\mathcal{Y}}) + \lambda \cdot \mathcal{L}_{p} (\varphi_i, \hat{\mathcal{Y}})$ is strongly convex.
    
    According to Theorem \ref{theorem:sgd_convergence}, let $$\tau = {(\frac{2C_1}{\epsilon})}^2,$$
    We have $$\mathbf{E}[F_i(\theta_i) - F_i(\theta_i^*) ] \leq C_1 \cdot \frac{1 + log(\tau)}{\tau} < C_1 \cdot \frac{2\sqrt{\tau}}{\tau}=\epsilon.$$

    Furthermore, as the dataset satisfies the property defined in $\phi_j$, $L_p(\varphi_i, \hat{Y}) = 0$ for $\theta^*_i$. Because the original objective is  Therefore, $F_i(\theta_i^*) = 0$. We then have $$\epsilon > \mathbf{E}[F_i(\theta_i) - F_i(\theta_i^*)] = \mathbf{E}[F_i(\theta_i)] \geq \mathbf{E}[\mathcal{L} (\mathcal{Y}, \hat{\mathcal{Y}})], $$ which finishes the proof.
\end{proof}


Note that while our assumption in theoretical analysis is rather strong, non-convex functions (such as deep neural networks) have already achieved good performance through optimization in real practice. According to our experimental analysis, general STL properties can also be used in our proposed algorithm and support the model training.

\section{Signal Temporal Logic and Its Inference}

In this section, we discuss additional details on STL and the STL-based property inference task. 
We begin by providing definitions of STL qualitative semantics~\cite{maler2004monitoring}, quantitative semantics~\cite{donze2010robust}, and additional formulas used in the specification mining task in FedSTL. We present illustrative examples to demonstrate these concepts in practical scenarios.

To begin, Definition~\ref{def:stlbool} presents the qualitative (Boolean) semantics for an STL formula $\varphi$. Here, we use the following notations: $\mathbf{x}$ denotes a signal trace, $\mu$ denotes an STL predicate, and $\varphi, \varphi_1$, and $\varphi_2$ represent different STL formulas.

\begin{definition}[STL Qualitative Semantics]\label{def:stlbool}
\begin{align*}
& \left(\mathbf{x}, t\right) \vDash \top & \Leftrightarrow & \quad \top \\
& \left(\mathbf{x}, t\right) \vDash \mu & \Leftrightarrow & \quad \mu \left(\mathbf{x} [t] \right) \\
& \left(\mathbf{x}, t\right) \vDash \varphi_1 \vee \varphi_2 \quad & \Leftrightarrow & \quad\left(\mathbf{x}, t\right) \vDash \varphi_1 \vee \left(\mathbf{x}, t\right) \vDash \varphi_2\\
& \left(\mathbf{x}, t\right) \vDash \varphi_1 \wedge \varphi_2 \quad & \Leftrightarrow & \quad\left(\mathbf{x}, t\right) \vDash \varphi_1 \wedge\left(\mathbf{x}, t\right) \vDash \varphi_2\\
& \left(\mathbf{x}, t\right) \vDash \Diamond_{[a, b]} \varphi \quad & \Leftrightarrow & \quad \exists t^{\prime} \in\left[t+a, t+b\right],\left(\mathbf{x}, t^{\prime}\right) \vDash \varphi\\
& \left(\mathbf{x}, t\right) \vDash \square_{[a, b]} \varphi \quad & \Leftrightarrow & \quad \forall t^{\prime} \in\left[t+a, t+b\right], \left(\mathbf{x}, t^{\prime}\right) \vDash \varphi\\
& \left(\mathbf{x}, t\right) \vDash \varphi_1 \mathcal{U}_{[a, b]} \varphi_2 & \Leftrightarrow & \quad\exists t^{\prime} \in\left[t+a, t+b\right], \left(\mathbf{x}, t^{\prime}\right) \vDash \varphi_2
\\&&& \quad \wedge \forall t^{\prime \prime} \in\left[t, t^{\prime}\right],\left(\mathbf{x}, t^{\prime \prime}\right) \vDash \varphi_1
\end{align*}
\end{definition}

While the qualitative semantics provide a Boolean evaluation of the satisfaction of an STL property, our specification inference task relies on a real-valued measurement of property satisfaction, known as the STL robustness metric ($\rho$). Definition~\ref{def:stlrob} describes how this metric is calculated, which maps a given signal trace $\mathbf{x}$ and an STL formula $\varphi$ to a real number over a specified time interval $I$. 

\begin{definition}[STL Robustness $\rho$]\label{def:stlrob}
\begin{align*}
&\rho(x \sim c, \varphi, t) &&= \pi(\mathbf{x}[t] ) - c \\
&\rho(\neg \varphi, \mathbf{x}, t) &&= - \rho(\varphi, \mathbf{x}, t) \\
&\rho(\varphi_1 \vee \varphi_2, \mathbf{x}, t) &&=  \max\{\rho(\varphi_1, \mathbf{x}, t), \rho(\varphi_2, \mathbf{x}, t) \}\\
&\rho(\varphi_1 \land \varphi_2, \mathbf{x}, t) &&=  \min\{\rho(\varphi_1, \mathbf{x}, t), \rho(\varphi_2, \mathbf{x}, t) \}\\
& \rho(\Diamond_I \varphi , \mathbf{x}, t) && = \underset{t' \in (t, t+I)}{\max} \rho(\varphi, \mathbf{x}, t')\\
& \rho(\square_I \varphi, \mathbf{x}, t) && = \underset{t' \in (t, t+I)}{\min} \rho(\varphi, \mathbf{x}, t') \\
&\rho(\varphi_1 \mathcal{U} \varphi_2, \mathbf{x}, t) &&= \sup_{t'\in (t + I) \cap \mathbb{T}} (\min\{\rho(\varphi_2, \mathbf{x}, t'), \\
&&& \, \quad \inf_{t''\in[t,t']}(\rho(\varphi_1, \mathbf{x}, t'')) \})
\end{align*}
\end{definition}

\subsection{Example of STL specification and monitoring}
To demonstrate the application of STL, we present a simple example. The trajectory illustrated below is generated using the function $x(t)=sin(t)$, representing a sine wave with an amplitude and frequency both set to 1. The STL formula  $\varphi_1 = \Diamond_{[0, 2\pi]} ( x \geq 1 )) \wedge \Diamond_{[2\pi, 4\pi]} ( x \geq 1 )) \wedge \Diamond_{[4\pi, 6\pi]} ( x \geq 1 ))$ can be used to express the requirement that during the time intervals $[0,2\pi] $, $[2\pi, 4\pi] $, and $[4\pi, 6\pi]$, the trajectory must be greater than or equal to 1 at some point. Moreover, the STL formula $\varphi_2 = (x \geq 0) \; \mathcal{U}_{[0,2\pi]} \, (0 > x)$ can be used to express the requirement that during the time interval $[0,2\pi]$, the trajectory must be greater than or equal to zero, unless at some point it becomes negative.

\subsection{STL robustness-based logic property inference}
As described in Section 3, the goal of STL property inference is to derive a full STL formula that closely matches a given set of sequences. To achieve this, let $\varphi'$ be the inferred formula from sequence $\mathbf{x}$, and we aim for the robustness $\rho(\varphi',\mathbf{x})$ to be a small positive number ($0 < \rho \ll 1$). By utilizing the STL robustness metric introduced earlier, we can formulate the objective of learning an STL requirement as Equation~\ref{eq:solverobu}, where $\alpha$ and $\alpha^\prime$ represent two candidate values of the unknown variable. 

\begin{equation}\label{eq:solverobu}
\begin{gathered}
    \mathop{\text{min}}{|\epsilon|} \; s.t. \; \epsilon=\alpha^\prime-\alpha  \\ 
    \text{where} \; \rho(\varphi(\alpha),\mathbf{x},t) \geq 0 \; \text{and} \; \rho(\varphi(\alpha^\prime),\mathbf{x},t) \leq 0
\end{gathered}
\end{equation}

In other words, our objective is to minimize the value of $\epsilon$, which represents the discrepancy between a satisfactory parameter value and an unsatisfactory parameter value within the context of the STL robustness function. However, the STL robustness function is not differentiable at zero. Therefore, to tackle this issue, Equation 1 is utilized as an approximation to effectively address this problem.

\section{Additional Real-World Use Cases}\label{app:b}

In Section 3.2, we present eight types of properties that are enhanced by FedSTL. In this section, we further demonstrate the versatility of FedSTL by providing additional logic reasoning specifications. These examples illustrate how our framework can be seamlessly applied to various real-world scenarios, thus showcasing its generalizability.

\paragraph{Smart Home}
Smart Home refers to an integrated IoT technology that connects household appliances to the internet. By leveraging artificial intelligence (AI), Smart Home solutions offer enhanced customization capabilities for user profiles. One notable advantage of employing federated learning (FL) algorithms in the context of Smart Home is the mitigation of privacy leakage concerns.

\begin{itemize}[wide=10pt, leftmargin=*, topsep=0pt]
    \item \textit{Operational Range:} The usage duration of a smart home lighting system within each hour of a day should be limited to a maximum of $a_i$ minutes and a minimum of $b_i$ minutes. 
    \[ \bigwedge_{i=1}^{1, 2, ..., 24} (\square_{[i,i+t]} (x\leq a_i \wedge x\geq b_i)) \]
    \item \textit{Existence:} The automated pet feeder should be activated (denoted as $act$) a minimum of $a$ times per day, and the smart lawn watering system (denoted as $lawn$) should be activated at least $b$ times per day. 
    \[ \Diamond_{[1,24]} (\textit{act} > a) \wedge \Diamond_{[1,24]} (\textit{lawn} > b) \]
\end{itemize}

\paragraph{Smart Grid}
Smart Grid enables bidirectional communication between power plants, factories, city offices, and homes. In the context of Smart Grid services, Federated Learning (FL) is applied in conjunction with smart meters and devices to analyze energy consumption, mitigating concerns about privacy breaches. 

\begin{itemize}[wide=10pt, leftmargin=*, topsep=0pt]
    \item \textit{Operational Range:} The demand for electricity in regular households typically exhibits higher demand occurring between 4-9 p.m. during peak hours, while the demand is lowest during the night hours. 
    \[ (\square_{[16,21]} \, \textit{demand} > d_{higher}) \wedge (\square_{[0,5]} d_{lower} > \textit{demand}) \]
    \item \textit{Temporal Implications:} The power consumption should decrease during a severe storm that causes a power outage. 
    \[ \square_{[0,24]} \, (\textit{outage} \rightarrow \Diamond_{[0,10]} \, \textit{power} \leq previous(\textit{power}) ) \]
    \item \textit{Intra-Task:} Households equipped with solar panels that supplement electricity consumption exhibit lower overall electric energy consumption $c$, particularly during the daytime. 
    
    $ \square_{[6,17]} \, (\textit{solar panel} \wedge \textit{irradiance} \geq 800 )\rightarrow (\Diamond_{[0,5]  \allowbreak 
    } \lvert  c_2 - c_1 \rvert > 20 ) $
\end{itemize}
    
\paragraph{Smart Healthcare}
Smart Health technologies offer significant advantages to modern healthcare systems, such as remote health monitoring for patients with chronic diseases. By leveraging FL, these systems improve patient monitoring, enable early detection of health issues, and facilitate personalized interventions through secure analysis of data from diverse sources without compromising privacy. 

\begin{itemize}[wide=10pt, leftmargin=*, topsep=0pt]
    \item \textit{Operational Range:} In a human activity recognition task, the effectiveness of an exercise is ensured by monitoring whether the linear and angular acceleration are within a specific range. 
    \[ \square_{[0,t]} ( a_{min} \leq a \leq a_{max} \wedge  \alpha_{min} \leq \alpha \leq \alpha_{max} ) \]
    \item \textit{Until:} In a repeated shoulder abduction task, the horizontal movement $\theta$ of the patient's shoulder must be confined to a narrow range until a certain vertical angle $\varphi$ is reached. 
    \[ \square_{[0,t]} ( 0 \leq \theta \leq \theta_{max} ) \; \mathcal{U}_{[0,\tau]} \, ( \varphi_{min} \leq \varphi ) \]
\end{itemize}

\paragraph{Autonomous Driving}
FL techniques have emerged to address privacy concerns in vehicular edge computing networks. In line with this, we show three ways in which STL properties can be extended to autonomous driving tasks. 
\begin{itemize}[wide=10pt, leftmargin=*, topsep=0pt]
    \item \textit{Operational Range:} The ego vehicle $p_i$ must maintain a safe braking distance from the ado vehicle $p_j$. \[ \square_{[0,t]} (\lvert p_i - p_j \rvert \geq d_{safe}) \] 
    \item \textit{Existence:} The ego vehicle $i$ should eventually reach the target speed. 
    \[ \Diamond_{[0,t]} v_i \geq v_{target} \]
    \item \textit{Multiple Eventualities: } The ego vehicle should sequentially reach three destinations. 
    
    $ \square_{[0,t_1]} ( \lvert p_i - p_{goal}^1 \rvert \geq d_{min} ) \wedge \square_{[t_1,t_2]} ( \lvert p_i - p_{goal}^2 \rvert \allowbreak \geq d_{min} ) \wedge \square_{[t_2,t_3]} ( \lvert p_i - p_{goal}^3 \rvert \geq d_{min} ) $
\end{itemize}


\section{Additional Experimental Details}

\subsection{SUMO Dataset}
Simulation of Urban MObility (SUMO)~\cite{krajzewicz2002sumo} is an open-source road traffic simulator widely used for multi-modal traffic and smart city research. In this study, we utilize a SUMO-simulated dataset to evaluate the performance of FedSTL in a multivariate smart city and pollution prediction task based in Nashville, TN. 
To construct the dataset, we focus on primary and secondary road types for simplicity. We consider a large-scale road network that accommodates cars, trucks, and motorcycles, while public transportation modes such as trains, buses, bicycles, and pedestrians are planned for future work.
For the evaluation, we select 100 road segments out of a total of 20,309 as FL clients. We record various parameters for each client, including vehicle counts, road occupancy, mean speed, carbon dioxide emission, average fuel consumption, and noise emission, for 3,600 simulation steps.


\begin{table*}[t]
\caption{Runtime Evaluation of Specification Mining with Various Templates on 500 Clients.}
\small
\centering
\begin{adjustbox}{max width=\textwidth}
\begin{tabular}{|l|l|l|}
\hline
\textbf{Temporal Reasoning Property} & \textbf{Templated Logic Formula} & \textbf{Runtime (s)} \\ \hline
1. Operational Range (single)           &   $\begin{aligned}
& \square_{[i,i+t]} (x\leq a_i\wedge x\geq b_i) 
\end{aligned}$                       &   0.0032      \\ \hline
2. Existence (single)                   &   $\begin{aligned}
& \Diamond_{[i,i+t]} (x \leq a_i \wedge x \geq b_i) 
\end{aligned}$                      &  0.0052       \\ \hline
3. Until (single)                       &   $\begin{aligned}
& (x < a_i) \, \mathcal{U}_{[i,i+1]} (x < b_i)
\end{aligned}$                       &  0.0111       \\ \hline
4. Intra-task Reasoning        &   $\begin{aligned}
& \square_{[i,i+t]}((x_1-x_2)>a_i)
\end{aligned}$                       &  0.0065       \\ \hline
5. Temporal Implications       &   $\begin{aligned}
& \square_{[t_1,t_2]} ((x \geq a_1) \rightarrow \Diamond_{[t_3,t_4]}(x \geq a_2)) 
\end{aligned}$                       &  0.0239       \\ \hline
6. Intra-task Nested Reasoning &   $\begin{aligned}
& \square_{[t_1,t_2]}((x_1 \geq a) \rightarrow \Diamond_{[t_3,t_4]} (x_2 \geq b))
\end{aligned}$                       &  0.0251       \\ \hline
7. Multiple Eventualities      &   $\begin{aligned}
& \Diamond_{[t_1,t_2]} (x \geq a_1) \wedge \cdots \wedge \Diamond_{[t_3,t_4]} (x \geq a_n)
\end{aligned}$                       &   0.0274      \\ \hline
\end{tabular}
\end{adjustbox}
\label{tab:p-t}
\end{table*}

\begin{table}[t]
\caption{Specification Mining Runtime(s) Across Varied Templates on Average for 500 Clients in Seconds.}
\small
\centering
\begin{adjustbox}{max width=0.45\textwidth}
\begin{tabular}{|l|c|c|c|}
\hline
\textbf{Temporal Reasoning Property} & \textbf{5 formulas} & \textbf{10 formulas} & \textbf{15 formulas} \\ \hline
1. Operational Range           & 0.0166   &  0.0317    &  0.0469    \\ \hline
2. Existence                   & 0.0253  &  0.0521    &  0.0769    \\ \hline
3. Until                       & 0.0532  &  0.1101    &  0.1607    \\ \hline
4. Intra-task Reasoning        & 0.0327  &  0.0647    &  0.0966    \\ \hline
5. Temporal Implications       & 0.1216  &  0.2443   &  0.3588  \\ \hline
6. Intra-task Nested Reasoning & 0.1224  &  0.2421   &  0.3684   \\ \hline
7. Multiple Eventualities      & 0.1390  &  0.2832   &  0.4173   \\ \hline
\end{tabular}
\end{adjustbox}
\label{tab:nof-t}
\end{table}

\begin{table}[t]
\caption{Average Cluster Training Time for 500 Clients in Seconds.}
\small
\centering
\begin{adjustbox}{max width=0.45\textwidth}
\begin{tabular}{|c|c|c|c|c|}
\hline
\textbf{Clients Per Cluster} & \textbf{5 formulas} & \textbf{10 formulas} & \textbf{15 formulas} & \textbf{20 formulas} \\ \hline
10            &   0.5545      &  0.6801  &  0.8035    &  0.9456    \\ \hline
20            &   0.6970      &  0.8422  &  1.0038    &  1.2006    \\ \hline
50            &   1.0387      &  1.2457  &  1.5123    &  1.7981    \\ \hline
100           &   1.1792      &  1.4327  &  1.7991    &  2.0559    \\ \hline
\end{tabular}
\end{adjustbox}
\label{tab:training-t}
\end{table}

The selection of road segments is imported from OpenStreetMap (OSM) using the \texttt{OSMWebWizard} tool provided by SUMO. The tool allows us to import specific road segments based on our requirements.
To generate the traffic demand, we utilize the SUMO simulator. The data is retrieved through the TraCI Python interface, which provides a convenient way to interact with the simulation. We use the following commands to retrieve the necessary traffic data:
\begin{itemize}
    \item \texttt{traci.edge.getLastStepVehicleNumber()}
    \item \texttt{traci.edge.getLastStepOccupancy()}
    \item \texttt{traci.edge.getLastStepMeanSpeed()}
    \item \texttt{traci.edge.getCO2Emission()}
    \item \texttt{traci.edge.getFuelConsumption()}
    \item \texttt{traci.edge.getNoiseEmission()} 
\end{itemize}

\subsection{Dataset Format}
The real-world traffic dataset used in our study is obtained from the U.S. Traffic Volume Data Tabulations. We specifically acquire the hourly traffic volume data for the year 2019 from 15 different states. The data is initially stored in plain text format.
Before using the data for our experiments, we preprocess it to handle missing values. Specifically, we apply linear interpolation to fill in any gaps in the data. The preprocessed dataset is then saved in the \textit{.npy} format, which is a binary file format for storing numpy arrays.
To ensure consistency in our evaluations, we split the dataset into three parts: training, evaluation, and testing. We use an 80/10/10 split ratio, where 80\% of the data is used for training, 10\% for evaluation, and 10\% for testing.

\subsection{Additional Evaluations on Runtime}

Our proposed framework is designed with scalability as a central consideration. 
Specifically, during the client updating process, each client generates their properties individually offline, and each client also follows a parallel model training scheme. Similarly, cluster models adhere to a similar design, ensuring that the models are trained individually in parallel. Therefore, the total training time of each framework would not increase with more participating clients. To further demonstrate the scalability of our framework and evaluate the runtime at a larger scale, we conducted additional evaluations involving 500 federated learning clients. 

Table~\ref{tab:p-t} shows the average time required for extracting various types of logic reasoning properties across a dataset of 500 clients. The provided runtimes represent the average time taken to extract each of the logic properties as outlined in Table~\ref{tab:stl-case} of the paper. Notably, for straightforward yet expressive properties like operational range and existence, the average extraction time for one property on a time series with a length of 180 can be as short as 0.0032 seconds. Conversely, the most time-consuming property to extract is multiple eventualities, primarily due to the complexity of the multiple $\wedge$ operations involved.

In Table~\ref{tab:p-t}, we showcase the time required for extracting various types of logic reasoning properties, averaged across 500 clients. By enumerating the number of property formulas per client, we generally observe a linear trend in the time increment. For the majority of FL clients in real-world applications, as mentioned in the previous sections, having 15 formulas is expressive enough to specify the important properties. As evident from the table, even when extracting all 15 properties combined, it still takes less than half a second to complete this process. 

Finally, in Table~\ref{tab:training-t}, we provide a demonstration of the average training time required per cluster for each communication round. Specifically, we set the number of local updates for each client to be 10 and the number of cluster training epochs to be 5. The first column indicates the number of clients per cluster. In real-world scenarios, different cluster sizes can influence model distributions, aggregation time, and property inference time. To calculate the average training time per cluster, we aggregate the average client training time for that cluster, the model aggregation time, and the averaging cluster training time. As depicted in the table, the aggregated training time per cluster per communication round increases with the number of formulas and the number of clients belonging to that cluster. In real model deployments, each cluster runs in parallel with the others, further enhancing the scalability of FedSTL.



\subsection{Experiment Parameters}

\paragraph{General FL settings} In each testing scenario, we have a fixed number of 100 FL clients. During the communication rounds, 10\% of the clients are randomly selected.
The batch size is set to be 64. 
For the Transformer, RNN, and GRU models, we set the maximum learning rate to be 0.001. However, for the LSTM models, the maximum learning rate is set to 0.01. 
To optimize the models, we utilize the SGD optimizer in the PyTorch implementation. The optimizer's `weight\_decay' parameter is set to 0.0001. We use the mean squared error (MSE) loss, implemented in PyTorch, which is commonly employed in regression tasks. In our default setup, the number of local training epochs is set to be 10 if not otherwise specified. 

\paragraph{Additional Parameters}
In FedProx, we set the value of the parameter $\mu$ to be 0.1, which controls the balance between the local and global models during aggregation. In Ditto, the parameter $\lambda$ is set to 1. In both IFCA and FedSTL, the number of clusters is set to be 5, determining the division of clients into distinct groups for clustering-based FL. In FedRep, the epochs for training local heads is set to be 8, indicating the number of iterations each client conducts on its local data to update its private head. In FedSTL, we modify the number of local training epochs to be 8, specifying the number of iterations each client performs on its local data. Additionally, we set the cluster training epochs to be 2. The following layers are designated as the publicly aggregated layers for both FedRep and FedSTL.  

\begin{itemize}[wide=10pt, leftmargin=*, topsep=0pt]
    \item RNN
    \begin{itemize}[wide=10pt, leftmargin=*, topsep=0pt]
        \item \texttt{rnn\_1.weight\_ih}
        \item \texttt{rnn\_1.weight\_hh}
        \item \texttt{rnn\_1.bias\_ih}
        \item \texttt{rnn\_1.weight\_hh}
        \item \texttt{rnn\_1.bias\_hh}
        \item \texttt{rnn\_2.weight\_ih}
        \item \texttt{rnn\_2.weight\_hh}
        \item \texttt{rnn\_2.bias\_ih}
        \item \texttt{rnn\_2.weight\_hh}
        \item \texttt{rnn\_2.bias\_hh}
    \end{itemize}
    \item GRU
    \begin{itemize}[wide=10pt, leftmargin=*, topsep=0pt]
        \item \texttt{gru\_1.weight\_ih}
        \item \texttt{gru\_1.weight\_hh}
        \item \texttt{gru\_1.bias\_ih}
        \item \texttt{gru\_1.weight\_hh}
        \item \texttt{gru\_1.bias\_hh}
        \item \texttt{gru\_2.weight\_ih}
        \item \texttt{gru\_2.weight\_hh}
        \item \texttt{gru\_2.bias\_ih}
        \item \texttt{gru\_2.weight\_hh}
        \item \texttt{gru\_2.bias\_hh}
    \end{itemize}
    \item LSTM
    \begin{itemize}[wide=10pt, leftmargin=*, topsep=0pt]
        \item \texttt{lstm\_1.weight\_ih}
        \item \texttt{lstm\_1.weight\_hh}
        \item \texttt{lstm\_1.bias\_ih}
        \item \texttt{lstm\_1.weight\_hh}
        \item \texttt{lstm\_1.bias\_hh}
        \item \texttt{lstm\_2.weight\_ih}
        \item \texttt{lstm\_2.weight\_hh}
        \item \texttt{lstm\_2.bias\_ih}
        \item \texttt{lstm\_2.weight\_hh}
        \item \texttt{lstm\_2.bias\_hh}
    \end{itemize}
    \item Transformer
    \begin{itemize}[wide=10pt, leftmargin=*, topsep=0pt]
        \item \texttt{encoder\_input\_layer.weight}
        \item \texttt{encoder\_input\_layer.bias}
        \item \texttt{decoder\_input\_layer.weight}
        \item \texttt{decoder\_input\_layer.bias}
        \item \texttt{linear\_mapping.weight}
        \item \texttt{linear\_mapping.bias}
    \end{itemize}
\end{itemize}

\section{Additional Literature Review}
\subsection{Knowledge-enhanced learning algorithms}
The concept of incorporating rigorous, formal logic specifications to enhance machine learning models is rooted in the approach of integrating prior knowledge into machine learning, which is also known as \textit{informed machine learning}. This approach has been extensively discussed in recent surveys such as Von Rueden et al.~\cite{von2021informed} and Karniadakis et al.~\cite{karniadakis2021physics}. By incorporating domain-specific auxiliary knowledge and constraints, informed machine learning methods aim to improve the performance of data-driven predictive models from many aspects. 
For instance, Muralidhar et al.\cite{muralidhar2018incorporating} propose a domain knowledge-aware loss function that enables learning models to capture the quantitative range and trend of the data more effectively. This approach enhances the model's ability to represent increasing or decreasing patterns within the data. Similarly, Diligenti et al.\cite{diligenti2017integrating} utilize first-order logic to express property constraints and introduce a logic-based loss function to penalize any violations of these constraints during model training.
In general, these approaches enable machine learning models to incorporate meaningful constraints and capture essential characteristics of the data. 

In addition to improvements in loss functions and optimization objectives, other works have explored various strategies to incorporate prior knowledge into neural networks by designing knowledge-injected network structures. These approaches aim to encode inherent model properties and improve the model's ability to capture relevant patterns and characteristics of the data.
For example, Jia et al.~\cite{jia2021physics} enhance the standard recurrent neural network (RNN) block by introducing two additional gates. These gates allow the network to extract seasonal and yearly patterns from lake temperature data, enabling the model to better capture and represent long-term temporal dependencies.
Another popular strategy is to leverage different variations of machine learning frameworks to integrate prior knowledge effectively. For instance, Ma et al.~\cite{ma2020stlnet} propose a teacher-student RNN model for cyber-physical systems. In this approach, the teacher generates traces that conform to the desired model properties, while the student network observes and mimics the teacher's behavior, thereby learning to capture the desired system dynamics. Furthermore, Hu et al.~\cite{hu2020harnessing} propose a method that involves projecting neural network predictions onto a logic-constrained subspace. By enforcing adherence to logic rules, the model ensures that its predictions satisfy the specified logical constraints. 

However, applying the previous methods directly to Federated Learning (FL) tasks poses a critical challenge. It is impractical to expect meaningful and accurate data properties to be defined for a large number of clients. Additionally, involving domain experts to manually examine client datasets introduces potential security risks to the existing system. 
Unlike previous approaches that rely on pre-defined data properties, our proposed approach, FedSTL, overcomes these challenges by dynamically exploring meaningful data properties locally. This eliminates the need for manual input and mitigates privacy concerns. 
Moreover, previous approaches are limited in terms of the number of prior knowledge that can be enforced on predictive models. For example, the work by Jia et al. is confined to a specific scenario of lake temperature prediction. In contrast, our approach takes advantage of specification mining techniques, enabling the exploration of a wide variety of logic reasoning specifications. This flexibility allows us to apply the method to diverse FL scenarios and leverage the power of logic-based reasoning.

\subsection{Personalized federated learning}
FL is a privacy-preserving learning paradigm that allows local, decentralized devices to collectively contribute to a shared global model without sharing their sensitive local databases. 
More concretely, the objective of FL is to find an optimal model that performs well across all clients~\cite{mcmahan2017communication}.
While the optimization of the shared global model is central to FL and its variants~\cite{li2020federated}, applying FL to real-world scenarios introduces challenges such as limited data, statistically heterogeneous datasets, and insufficient computing power, which can significantly impact the performance of the shared model on individual devices~\cite{nguyen2021federated, li2020review, yang2019federated}. 

In contrast to traditional FL frameworks, personalized FL prioritizes the training of local models tailored to individual clients, rather than relying on a single global model. Recent work has shifted the learning objective to focus on training local models while leveraging the assistance or initialization of a pre-trained global model~\cite{tan2022towards}. To effectively adapt the shared global model to client-specific tasks, several approaches have been developed, including model regularization~\cite{li2021ditto}, fine-tuning~\cite{NEURIPS2022_3cc03e19}, client clustering~\cite{ghosh2020efficient}, and transfer learning~\cite{mansour2020three}. These techniques aim to enhance the performance and customization of the FL framework to meet diverse client requirements and account for variations in data characteristics.

More specifically, Ghosh et al.~\cite{ghosh2020efficient} propose a clustering algorithm that addresses conflicting learning tasks among FL clients by re-identifying client-cluster affiliation in each communication round using client empirical losses. In contrast, our solution assigns FL clients to clusters based on the alignment of the induced logic reasoning property. Additionally, other approaches such as Collins et al.\cite{collins2021exploiting} and Arivazhagan et al.~\cite{arivazhagan2019federated} leverage model layer customization and train personalized layers to alleviate data heterogeneity

The FedSTL framework presented in this paper represents a significant advancement in both FL client clustering and layer-wise customization techniques. 
FedSTL distinguishes itself by employing a hierarchical approach, enhancing both client-level properties in customized layers and cluster-level properties in cluster-aggregated layers. This novel combination allows for improved performance and more effective utilization of both local and global information.

\section{Limitations}
In this section, we will discuss the limitations of our study and provide suggestions for future work. 
First, it is important to acknowledge that our multivariate evaluation task is conducted in a simulated environment with selected traffic elements. To fully assess the applicability of our approach, further investigations are needed in real-world multivariate Smart Transportation and Smart City use cases. 
Additionally, our current experiments do not consider public transportation modes with designated routes and schedules, such as light rail, local shuttles, and buses. Including these aspects in future experiments would provide a more comprehensive evaluation of our method.
Furthermore, it would be valuable to assess the generalizability of our proposed method to other IoT applications mentioned in Appendix~\ref{app:b}. Exploring the effectiveness of our approach in different domains would contribute to its broader applicability.
Moreover, in constructing the FL client databases for traffic prediction, we considered all types of sensor functional classifications. Future work could focus on developing more advanced methods to construct fine-grained client datasets and specifications, which could lead to more accurate and personalized predictions.
Finally, it is worth noting that our evaluation involved a limited number of FL clients (100 clients). Expanding the scale of the evaluation with a larger number of clients would provide a more comprehensive understanding of the performance and scalability of our approach.


\end{document}